\newcommand{\cdummy}{\cdot}
\newcommand{\mathd}{\mathrm{d}}
\newcommand{\tmop}[1]{\ensuremath{\operatorname{#1}}}
\newcommand{\tmtextbf}[1]{\text{{\bfseries{#1}}}}
\DeclareMathOperator*{\margmax}{\arg\max}
\newtheorem*{rep@theorem}{\rep@title}
\newcommand{\newreptheorem}[2]{%
\newenvironment{rep#1}[1]{%
 \def\rep@title{#2 \ref{##1}}%
 \begin{rep@theorem}}%
 {\end{rep@theorem}}}
\newtheorem{corollary}{Corollary}
\newtheorem{theorem}{Theorem}
\icmltitlerunning{GSmooth: Certified Robustness against Semantic Transformations via Generalized Randomized Smoothing}
\begin{document}

\newcommand{\fix}{\marginpar{FIX}}
\newcommand{\new}{\marginpar{NEW}}

\definecolor{mydarkblue}{rgb}{0,0.08,0.45}
\definecolor{mydarkgreen}{RGB}{0, 139, 69}
\hypersetup{
	colorlinks=true,
	urlcolor=magenta,
	citecolor=mydarkblue,
}
\definecolor{mycyan}{cmyk}{.3,0,0,0}

\newcommand{\revise}[1]{{\color{blue}{#1}}}
\newcommand{\yinpeng}[1]{{\color{red}{[yinpeng:#1]}}}
\newcommand{\junz}[1]{{\color{blue}{[jz:#1]}}}
\newcommand{\hangx}[1]{{\color{red}{[Hang:#1]}}}
\newcommand{\zhongkai}[1]{{\color{yellow}{[Z:#1]}}}
\newcommand{\hang}[1]{{\color{blue}{[Hang: #1]}}}
\newcommand{\ycy}[1]{{\color{red}{[ycy:#1]}}}

\twocolumn[
\icmltitle{GSmooth: Certified Robustness against \\ Semantic Transformations via Generalized Randomized Smoothing}



\begin{icmlauthorlist}
\icmlauthor{Zhongkai Hao}{to,ee}
\icmlauthor{Chengyang Ying}{to}
\icmlauthor{Yinpeng Dong}{to,re}
\icmlauthor{Hang Su}{to,goo,ed}
\icmlauthor{Jian Song}{ee,tv}
\icmlauthor{Jun Zhu}{to,re,goo,ed}
\end{icmlauthorlist}

\icmlaffiliation{to}{Dept. of Comp. Sci. \& Techn., Institute for AI, BNRist Center, Tsinghua-Bosch Joint ML Center, Tsinghua University}
\icmlaffiliation{goo}{Peng Cheng Laboratory}
\icmlaffiliation{ed}{Tsinghua University-China Mobile
Communications Group Co., Ltd. Joint Institute}
\icmlaffiliation{ee}{Dept. of EE, Tsinghua University}
\icmlaffiliation{re}{RealAI}
\icmlaffiliation{tv}{Key Laboratory of Digital TV System of Guangdong Province and Shenzhen City, Research Institute of Tsinghua University in Shenzhen}
\icmlcorrespondingauthor{Hang Su}{suhangss@tsinghua.edu.cn}
\icmlcorrespondingauthor{Jun Zhu}{dcszj@tsinghua.edu.cn}

\icmlkeywords{Machine Learning, ICML}

\vskip 0.3in
]



\printAffiliationsAndNotice{} 

\begin{abstract}
Certified defenses such as randomized smoothing have shown promise towards building reliable machine learning systems against $\ell_p$-norm bounded attacks. However, existing methods are insufficient or unable to provably defend against semantic transformations, especially those without closed-form expressions (such as defocus blur and pixelate), which are more common in practice and often unrestricted. 
To fill up this gap, we propose generalized randomized smoothing (GSmooth), a unified theoretical framework for certifying robustness against general semantic transformations via a novel dimension augmentation strategy. Under the GSmooth framework, we present a scalable algorithm that uses a surrogate image-to-image network to approximate the complex transformation. The surrogate model provides a powerful tool for studying the properties of semantic transformations and certifying robustness. Experimental results on several datasets demonstrate the effectiveness of our approach for robustness certification against multiple kinds of semantic transformations and corruptions, which is not achievable by the alternative baselines.
\end{abstract}

\section{Introduction}





Deep learning models are vulnerable to adversarial examples \citep{biggio2013evasion,szegedy2013intriguing,goodfellow2014explaining,dong2018boosting} as well as semantic transformations \citep{engstrom2019exploring,hendrycks2019benchmarking}, which can limit their applications in various security-sensitive tasks. For example, a small adversarial patch on the road markings can mislead the autonomous driving system \citep{jing2021too,qiaoben2021understanding}, which raises severe safety concerns.
Compared with the $\ell_p$-norm bounded adversarial examples, semantic transformations can occur more naturally in real-world scenarios and are often unrestricted, including image rotation, translation, blur, weather, etc., most of which are common corruptions \citep{hendrycks2019benchmarking}. Such transformations do not damage the semantic features of images that can still be recognized by humans, but they degrade the performance of deep learning models significantly. 
Therefore, it is imperative to improve model robustness against these semantic transformations.


\begin{figure}
    \centering
    \includegraphics[width=0.99\linewidth]{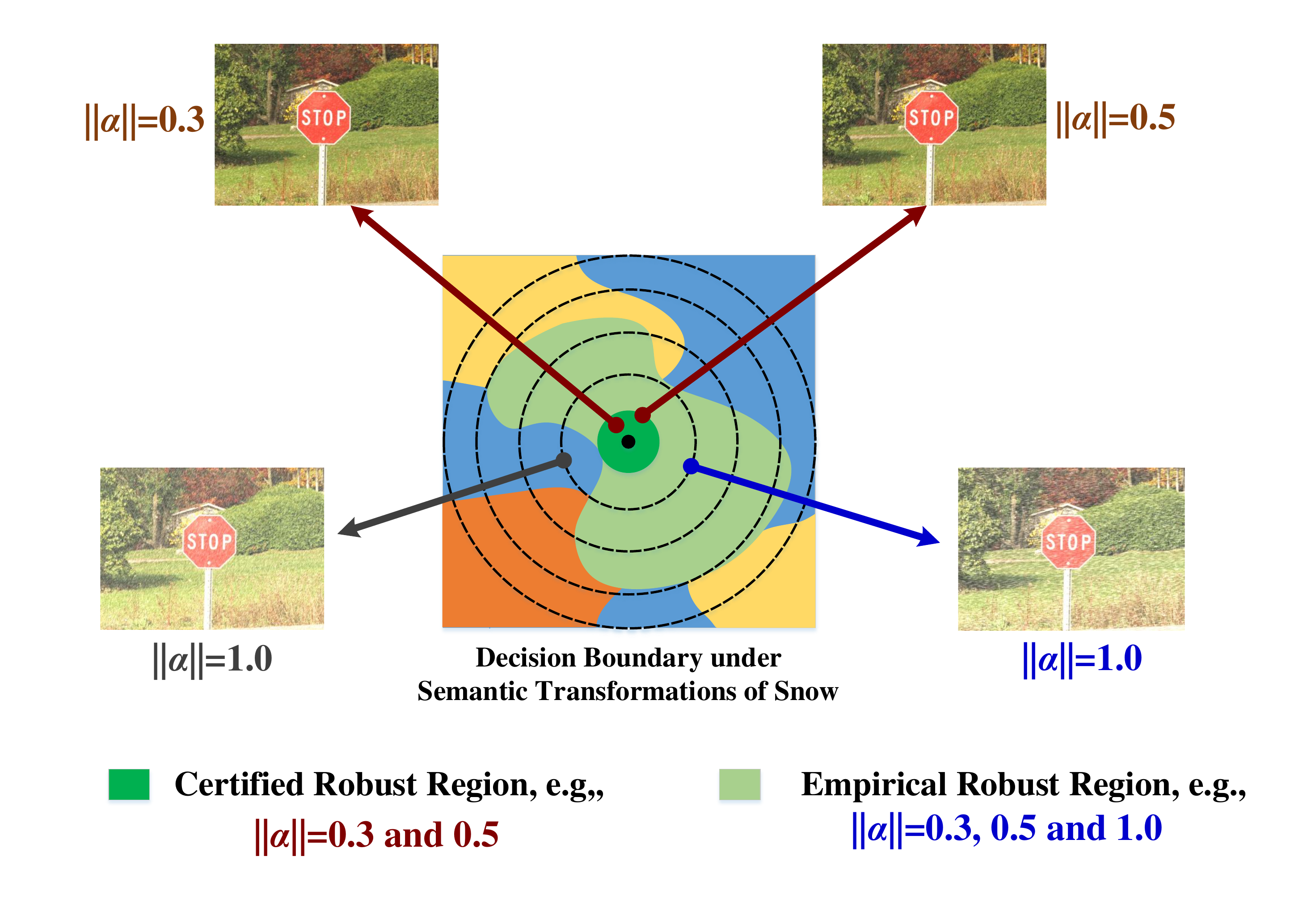}
    \caption{Illustration of certified defense against semantic transformations. We take snow as an example, where $\alpha$ is the parameter corresponding to the severity. The model is certifiably robust with $\|\alpha\|=0.3$ and $0.5$. Nevertheless, the model may make an erroneous prediction when $\|\alpha\|=1.0$ although it might be empirically robust to other corrupted inputs with $\|\alpha\|=1.0$. This implies that the empirical robustness may not be sufficient for safety-sensitive applications such as autonomous driving.}
    \label{fig1_}
\end{figure}

Although various methods \citep{wang2019implicit,hendrycks2019augmix,xie2020adversarial,calian2021defending} can empirically improve model robustness against semantic transformations on typical benchmarks (evaluated in average-case), these methods often fail to defend against adaptive attacks by generating adversarial semantic transformations \cite{hosseini2018semantic,engstrom2019exploring,wong2019wasserstein}, which are optimized over the parameter space of transformations for the worst-case. In contrast, the certified defenses aim to provide a certified region where the model is theoretically robust under any attack or perturbation \citep{wong2018provable,cohen2019certified, gowal2018effectiveness,zhang2019towards}. As an example in Fig.~\ref{fig1_}, we might encounter any levels and types of complex semantic transformations for autonomous driving cars in practice. An ideal safe model should tell users its safe regions that the model is certifiably robust under different transformations.

Several recent studies \citep{fischer2020certified,mohapatra2020towards,li2021tss,li2020provable,wu2022toward} have attempted to extend the certified defenses to simple semantic transformations with good mathematical properties like translation and Gaussian blur. However, 
these methods are neither scalable nor capable of certifying robustness against complex image corruptions and transformations, especially the non-resolvable ones \citep{li2021tss}. Typically, many non-resolvable transformations, such as zoom blur and pixelate, do not have closed-form expressions. This makes the theoretical analyses of these transformations difficult and sometimes impossible with the existing methods, although they are common in real-world scenarios. Therefore, it remains highly challenging to certify robustness against these complex and realistic semantic transformations.

In this paper, we propose \textbf{generalized randomized smoothing (GSmooth)}, a novel unified theoretical framework for certified robustness against general semantic transformations, including both the resolvable ones (e.g., translation) and non-resolvable ones (e.g., rotational blur).
Specifically, we propose to use an image-to-image translation neural network to approximate these transformations. 
Due to the strong capacity of neural networks, our method is flexible and scalable for modeling these complex non-resolvable semantic transformations. Then, we can theoretically provide the certified radius for the proxy neural networks by introducing new augmented noise in the layers of the surrogate model, which can be used for certifying the original transformations.
We further provide a theoretical analysis and an error bound for the approximation, which shows that the impact of the approximation error on the certified bound can be ignored in practice.
Finally, we validate the effectiveness of our methods on several publicly available datasets. The results demonstrate that our method is effective for certifying complex semantic transformations. 
Our GSmooth achieves state-of-the-art performance in both certified accuracy and empirical accuracy for different types of blur or image quality corruptions.

\section{Related Work}



\subsection{Robustness under Semantic Transformations}
Unlike an $\ell_p$ perturbation adding a small amount of noise to an image, semantic attacks are usually unrestricted. Representative works include the adversarial patches \cite{brown2017adversarial,song2018physical}, the manipulation based on spatial transformations such as rotation or translation \cite{xiao2018spatially,engstrom2019exploring}.
Recently, \citet{hendrycks2019benchmarking} show that a wide variety of image corruptions and perturbations degrade the performance of many deep learning models. Most of them such as types of blur, pixelate are hard to be analyzed mathematically, such that defending against them is highly challenging.  Various data augmentation techniques \citep{cubuk2019autoaugment,hendrycks2019augmix,hendrycks2021many, robey2020model} have been developed to enhance robustness under semantic perturbations. Also, \citet{calian2021defending} propose adversarial data augmentation that can be viewed as adversarial training for defending against semantic perturbations.

\subsection{Certified Defense against Semantic Transformations}
Beyond empirical defenses, several works \citep{singh2019abstract,balunovic2019certifying,mohapatra2020towards} have attempted to certify some simple geometric transformations. However, these are deterministic certification approaches and their performance on realistic datasets is unsatisfactory. Randomized smoothing is a novel certification method that originated from differential privacy \citep{lecuyer2019certified}. \citet{cohen2019certified} then improve the certified bound and apply it to realistic settings. \citet{yang2020randomized} exhaustively analyze the bound by
using different noise distributions and norms. \citet{hayes2020extensions} and \citet{yang2020randomized} point out that randomized smoothing suffers from the curse of dimensionality for the $l_{\infty}$ norm. 
\citet{fischer2020certified} and \citet{li2021tss} extend randomized smoothing to certify some simple semantic transformations, e.g., image translation and rotation. They show that randomized smoothing could be used to certify attacks beyond $l_p$ norm. However, their methods are limited to simple semantic transformations, which are easy to analyze due to their resolvable mathematical properties. Scalable algorithms for certifying most non-resolvable and complex semantic transformations remain unexplored.

\section{Proposed Method}
In this section, we present generalized randomized smoothing (GSmooth) with theoretical analyses. 
\subsection{Problem Formulation}
We first introduce the notations and problem formulation. Given the input $x\in\mathbb{R}^n$ and the label of $\mathcal{Y}= \{ 1, 2, \ldots, K \}$, we denote the classifier as $f (x) : \mathbb{R}^n \rightarrow [0,1]^K$, which outputs predicted probabilities over all $K$ classes. The prediction of $f$ is $\margmax_{i \in \mathcal{Y}} f(x)_i$, where $f(\cdot)_i$ denotes the $i$-th element of $f(\cdot)$. We denote $\tau (\theta,x) : \mathbb{R}^m \times \mathbb{R}^n \rightarrow \mathbb{R}^n$ to be a semantic
transformation of the raw input $x$ with parameter $\theta\in \mathbb{R}^m$. We define the smoothed classifier $G(x)$ as
\begin{equation}\label{1}
  G (x) =\mathbb{E}_{\theta \sim g(\cdot)} [f (\tau (\theta, x))],
\end{equation}
which is the average prediction for the samples under a smooth distribution $g(\theta) \propto \tmop{exp}(-\psi(\theta))$, here $\psi:\mathbb{R}^m \to \mathbb{R}$ is a smooth function. We let $y_A$ be the predicted label of the smoothed classifier $G(x)$ for a clean image, and we use $G(x)_A$ to denote the probability of the top-1 class $y_A$ in the rest of the paper as follows. Similarly, we can define $y_B$ as the runner-up class of the smoothed classifier $G (x)$. 
\begin{equation}
    y_A \triangleq \margmax_{i \in \mathcal{Y}} G(x)_i,\quad
    y_B \triangleq \margmax_{i \in \mathcal{Y} \backslash y_A} G(x)_i,
\end{equation}
and we use $G (x)_B$ to denote the probability of class $y_B$.

A classifier has a certified robust radius $R$ if it satisfies that for any perturbation $\|\xi\| \leqslant R$ where $\|\cdot\|$ is any $l_p$ norm without specification, we have
\begin{equation}
    \mathop{\arg\max}_{i \in \mathcal{Y}} G(\tau(\xi,x))_i = y_A.
\end{equation}
In this paper, we categorize semantic transformations into two classes: resolvable and non-resolvable. A semantic transformation is resolvable if the composition of two transformations
with parameters belonging to a perturbation set $\theta, \xi \in P \subset \mathbb{R}^m$, is still a transformation with
a new parameter  $\gamma = \gamma(\theta, \xi) \in P\subset \mathbb{R}^m$, here $\gamma (\cdummy, \cdummy):P\times P\rightarrow P$ is a function depending on these parameters, i.e.,
satisfying
\begin{equation}
    \tau(\theta, \tau(\xi, x)) = \tau(\gamma(\theta, \xi), x).
\end{equation}
Otherwise, it is non-resolvable, e.g., zoom blur and pixelate. The theoretical properties of resolvable transformations make it much easier to derive the certified bound. Therefore, in next two subsections, we will discuss resolvable and non-resolvable cases in turn under a unified framework.

\begin{figure*}
    \centering
    \includegraphics[width=0.99\textwidth]{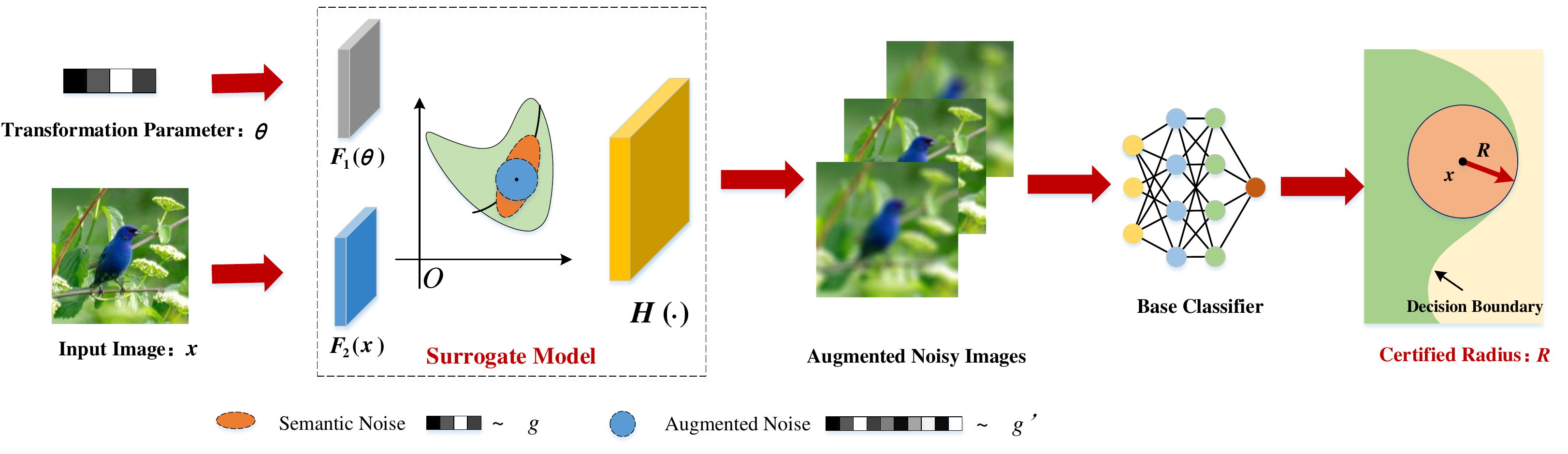}
    \caption{A graphical illustration of our GSmooth. We use a surrogate image-to-image translation network to accurately fit a semantic transformation. Then we add new augmented noise into the surrogate model and construct the GSmooth classifier. The augmented noise is sampled to ensure that the transformation is resolvable in the semantic space. We theoretically calculate the certified bound for the surrogate model to certify the original transformation.}
    \label{fig1}
\end{figure*}
\subsection{Certified Bound for Resolvable Transformations}
We first discuss a class of transformations that are resolvable. We introduce a function $\Phi(\cdot)$ which will be used in the certified bound. For any vector $u$ with unit norm, i.e., $\|u\|=1$, we set $\gamma_{u}\triangleq\langle u,\nabla \psi(\delta) \rangle$ as a random variable, where $\delta \sim g(\cdot)$ and $\nabla$ is the gradient operator.
We further define the complementary Cumulative Distribution Function (CDF) of $\gamma_u$ as $\varphi_u(c)=\mathbb{P}[\gamma_u > c]$ and the inverse complementary CDF of $\gamma_u$ as $\varphi^{-1}_u(p)=\tmop{inf}\{c|\mathbb{P}(\gamma_u>c)\leqslant p\}$. Following~\citet{yang2020randomized}, we define a function $\Phi$ as 
\begin{equation}
\label{eq2}
    \Phi(p) = \mathop{\max}_{\|u\|=1}\mathbb{E}[\gamma_u\mathbb{I}\{\gamma_u>\varphi^{-1}_u(p)\}].
\end{equation}
For resolvable semantic transformations, we have the following theorem. 
\begin{theorem}[\textbf{Certified bound for Resolvable Transformations}, proof in Appendix~\ref{app_thm_1}]
\label{thm-1}
For any classifier $f (x)$, let $G (x)$ be the smoothed classifier defined in Eq.~(\ref{1}). If there exists a function $M(\cdot,\cdot):P\times P\rightarrow \mathbb{R}^{m \times m}$ satisfying
  \[ \frac{\partial \gamma (\theta, \xi)}{\partial \xi} = \frac{\partial
     \gamma (\theta, \xi)}{\partial \theta} M (\theta, \xi), \]
  and there exist two constants $\underline{p_A}$, $\overline{p_B}$ satisfying
  \[ G (x)_A \geqslant \underline{p_A} \geqslant \overline{p_B} \geqslant G
     (x)_B, \]
  then $ \margmax_{i \in \mathcal{Y}} G (\tau (\xi, x))_i = y_A$ holds for any
  $\| \xi \| \leqslant R$, where
  \begin{equation}
    R = \frac{1}{ 2M^*} \int_{\overline{p_B}}^{\underline{p_A}} \frac{1}{\Phi(p)}\mathd p ,
  \end{equation}
and $M^* = \max_{\xi,\theta \in P}\|M(\xi,\theta)\|$.
\end{theorem}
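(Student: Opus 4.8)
The plan is to reduce this $m$-dimensional certification problem to a one-dimensional differential inequality for a worst-case value function, in the spirit of the Neyman--Pearson analysis of \citet{yang2020randomized}, with resolvability and the matrix $M$ supplying the reduction. Resolvability rewrites the perturbed smoothed classifier over the \emph{same} family of transformed inputs: since $\tau(\theta,\tau(\xi,x))=\tau(\gamma(\theta,\xi),x)$,
\[
  G(\tau(\xi,x))=\mathbb{E}_{\theta\sim g}\bigl[f(\tau(\gamma(\theta,\xi),x))\bigr],
\]
which at $\xi=0$ (where $\gamma(\theta,0)=\theta$) recovers $G(x)$. Writing an arbitrary $\xi$ with $\|\xi\|\le R$ as $tu$ with $\|u\|=1$, $t=\|\xi\|$, it suffices to control how fast the scalars $p_i(t):=G(\tau(tu,x))_i$ move along the ray: if $p_{y_A}(t)>p_{y_B}(t)$ and, more generally, $p_{y_A}(t)$ exceeds $p_i(t)$ for every $i\ne y_A$ up to $t\le R$, then the top class is unchanged.

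I would then differentiate in $t$ and use the hypothesis $\partial_\xi\gamma=(\partial_\theta\gamma)M$ to trade the $\xi$-derivative for a $\theta$-derivative. With $F(\theta):=f(\tau(\gamma(\theta,tu),x))_i\in[0,1]$, the chain rule plus this identity gives $\tfrac{d}{dt}F=\langle\nabla_\eta F,(\partial_\theta\gamma)Mu\rangle$, and since $(\partial_\theta\gamma)^{\top}\nabla_\eta F=\nabla_\theta F$,
\[
  \tfrac{d}{dt}F(\theta)=\langle\nabla_\theta F(\theta),\,Mu\rangle,\qquad\text{so}\qquad \tfrac{d}{dt}p_i(t)=\mathbb{E}_{\theta\sim g}\bigl[\langle\nabla_\theta F(\theta),\,Mu\rangle\bigr].
\]
Integrating by parts against $g\propto e^{-\psi}$ (so $\nabla_\theta g=-g\,\nabla\psi$), with the usual decay killing the boundary terms, moves the derivative onto the weight,
\[
  \tfrac{d}{dt}p_i(t)=\mathbb{E}_{\theta\sim g}\bigl[F(\theta)\,\langle M^{\top}\nabla\psi(\theta),\,u\rangle\bigr],
\]
plus a term proportional to $\nabla_\theta\!\cdot\!M$ that vanishes whenever $M$ is constant in $\theta$ (the typical case, e.g.\ $\gamma(\theta,\xi)=\theta+\xi$ gives $M=I$) and is otherwise absorbed into $M^*$. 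Using $\|M^{\top}\|\le M^*$ and letting $v$ range over unit vectors gives $|\tfrac{d}{dt}p_i(t)|\le M^*\max_{\|v\|=1}\bigl|\mathbb{E}_{\theta\sim g}[F(\theta)\langle v,\nabla\psi(\theta)\rangle]\bigr|$.

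The crucial step is the level-set (Neyman--Pearson) optimality that defines $\Phi$: recalling that $\gamma_v=\langle v,\nabla\psi(\delta)\rangle$, $\delta\sim g$, has mean zero, among all $F:\mathbb{R}^m\to[0,1]$ with $\mathbb{E}_{\theta\sim g}[F]=p$ the functional $\mathbb{E}_{\theta\sim g}[F\langle v,\nabla\psi\rangle]$ is maximized by the indicator of a super-level set $\{\gamma_v>\varphi_v^{-1}(p)\}$, and taking the maximum over $\|v\|=1$ gives exactly $\Phi(p)$ (the same bound, via $v\mapsto-v$, controls decreases). Hence $-p_{y_A}'(t)\le M^*\Phi(p_{y_A}(t))$ and $p_i'(t)\le M^*\Phi(p_i(t))$ for $i\ne y_A$. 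Comparing $p_{y_A}$ and the runner-up with the curves $a,b$ solving $a'=-M^*\Phi(a),\ a(0)=\underline{p_A}$ and $b'=M^*\Phi(b),\ b(0)=\overline{p_B}$ gives $p_{y_A}(t)\ge a(t)>b(t)\ge p_i(t)$ for $i\ne y_A$ until the time $t^*$ at which $a(t^*)=b(t^*)=:p^*$; adding the implicit identities $\int_{p^*}^{\underline{p_A}}\Phi(p)^{-1}\,\mathd p=M^*t^*=\int_{\overline{p_B}}^{p^*}\Phi(p)^{-1}\,\mathd p$ yields
\[
  R=t^*=\frac{1}{2M^*}\int_{\overline{p_B}}^{\underline{p_A}}\frac{\mathd p}{\Phi(p)},
\]
which is the claimed radius; the $\tfrac12$ is the even split of the margin between $y_A$ falling and the runner-up rising, and $M^*$ is the Jacobian-rescaling factor. (Sanity check: $\psi=\|\cdot\|^2/2\sigma^2$ collapses this to the classical $\tfrac{\sigma}{2}(\Phi_{\mathcal{N}}^{-1}(\underline{p_A})-\Phi_{\mathcal{N}}^{-1}(\overline{p_B}))$ of \citet{cohen2019certified}.)

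The main obstacles are analytic rather than the final quadrature: verifying that $\gamma(\cdot,\xi)$ is a diffeomorphism, that differentiation under the expectation is licit, and that the integration-by-parts boundary terms vanish (needs decay of $g$, regularity of $\psi$ and of $f\circ\tau$, or a mollification argument when $f$ or $\tau$ is not smooth); and the bookkeeping for non-constant $M$ -- the $\nabla_\theta\!\cdot\!M$ term -- which is exactly why the bound is stated through $M^*=\max_{\xi,\theta\in P}\|M(\xi,\theta)\|$ rather than a pointwise quantity. For the tightness of the Neyman--Pearson/level-set step I would lean on \citet{yang2020randomized}.
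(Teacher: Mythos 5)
Your proof follows essentially the same path as the paper's appendix argument: differentiate under the expectation, use the identity $\partial_\xi\gamma=(\partial_\theta\gamma)M$ to trade the $\xi$-gradient for a $\theta$-gradient, integrate by parts against $g\propto e^{-\psi}$, invoke the Neyman--Pearson level-set optimality to produce $\Phi$, and close with an ODE-comparison along a ray whose meeting point supplies the factor $\tfrac{1}{2M^*}$. One observation worth keeping: you explicitly flag the $\nabla_\theta\!\cdot\!M$ bookkeeping when $M$ depends on $\theta$, a point the paper's proof silently bypasses by pulling $M^*$ out before the integration by parts --- that step is only airtight when $M$ is constant in $\theta$ (as in the additive/commutable cases where $M=I$), and the divergence term is not in fact controlled by $M^*=\max\|M\|$ alone, so neither your sketch nor the paper's fully resolves it in the general $\theta$-dependent case.
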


\tmtextbf{Remark.} The settings of Theorem \ref{thm-1} are similar to those in \citet{li2021tss} for resolvable semantic transformations. But here we adopt a different presentation and proof for this theorem, which could be easier to extend to our GSmooth framework for general semantic transformations. 
Specifically, we show two examples of the theorem: additive transformations and commutable transformations. A transformation is additive if $\tau (\theta, \tau (\xi, x)) = \tau (\xi+\theta, x)$ for
any $\theta, \xi \in P$. It is commutable if $\tau (\theta, \tau (\xi, x)) = \tau (\xi, \tau
(\theta, x))$ for any $\theta, \xi \in P$. For these two types of transformations, it is straightforward to verify that they satisfy the property proposed in Theorem \ref{thm-1} with $M(\theta,\xi) = I$. Consequently, we simply apply Theorem \ref{thm-1} for an isotropic Gaussian distribution $g(\theta) = \mathcal{N}(0, \sigma^2 I)$ and
get the certified radius as
\begin{equation}
  R = \frac{\sigma}{2} \left( \Psi ( \underline{p_A} ) -
  \Psi ( \overline{p_B} ) \right),
\end{equation}
where $\Psi$ is the inverse CDF of the standard Gaussian distribution. These two kinds of transformations include image translation and Gaussian blur, which are basic semantic transformations and the results are consistent with previous work \citep{li2021tss,fischer2020certified}. The certification of these simple transformations only requires applying translation or Gaussian blur to the sample and we obtain the
average classification score under the noise distribution.

\subsection{Certified Bound for General Transformations}

Translation and Gaussian blur are two specific cases of
semantic transformations. In practice, most semantic transformations are not commutable or even not resolvable. Therefore, we need to develop better methods for certifying more types of semantic transformations. However, the existing methods like Semanify-NN~\citep{mohapatra2020towards}, based on convex relaxation, and TSS~\citep{li2021tss}, based on randomized smoothing, require developing a specific algorithm or bound for each individual semantic transformation. This is not scalable and might be infeasible for more complicated transformations without explicit mathematical forms. 

To address this challenge, we draw inspiration from the fact that neural networks are able to approximate functions including a complex and unknown semantic transformation \citep{zhu2017unpaired}. Specifically, we propose to use a surrogate image-to-image translation network to accurately fit the semantic transformation. We theoretically prove that, by introducing an augmented noise in the layer of the surrogate model, as shown in Fig.~\ref{fig1}, randomized smoothing can be extended to handle these transformations. 
Specifically, we define the surrogate model as the following form that will lead to a simple certified bound, as we shall see:
\begin{equation}
  \tau (\theta, x) = H (F_1 (\theta) + F_2 (x)),
\end{equation} 
where $F_1 (\cdummy) , F_2 (\cdummy) $, and $H (\cdummy)$ are three individual neural networks. $F_1(\cdot)$ and $F_2(\cdot)$ are encoders for transformation parameters and images respectively, and their encodings are added together in the semantic space, which is critical for the theoretical certification. We find that the surrogate neural network is much easier to analyze and can be certified by introducing a dimension augmentation strategy for both noise parameters and input images. 

As illustrated in Fig.~\ref{fig1}, the augmented noise is added to the semantic layer $H(\cdot)$ in the surrogate model.
Our key insight is that the transformation can be viewed as the superposition of a resolvable part and a non-resolvable residual part in the augmented semantic space. Then, we can use the augmented noise to control the non-resolvable residual part of the augmented dimension  $d\geqslant m+n$. This dimension augmentation is the key step in our method. The augmentation for noise is from $\mathbb{R}^m$ to $\mathbb{R}^d$. To keep the dimension consistent, we also augment data $x$ to $\mathbb{R}^d$ by padding 0 entries. 
By certifying the transformation using the surrogate model, we are able to certify the original transformation if the approximation error is within an acceptable region (details are in Theorem \ref{thm-3}). Our method is flexible and scalable because the surrogate neural network has a uniform mathematical form for theoretical analysis and is trained automatically. Next, we discuss the details of GSmooth. 

Specifically, 
we introduce the augmented data $\tilde{x} \in \mathbb{R}^d$ and the augmented parameter $\tilde{\theta} \in \mathbb{R}^d$ as
\begin{equation}
  \tilde{x} = \left( \begin{array}{c}
    \boldsymbol{0}_{d-n}\\
    x
  \end{array} \right),\ \tilde{\theta} = \left( \begin{array}{c}
    \theta\\
    \theta'
  \end{array} \right),
\end{equation}
where the additional parameters $\theta' \in \mathbb{R}^n$ are sampled from $g'
(\theta')$, and the joint distribution of $\theta'$ and $\theta$ is
$\tilde{\theta} \sim \tilde{g}$ where $\tilde{g} (\tilde{\theta}) = g'(\theta')g(\theta)$. 
We define the \emph{generalized
smoothed classifier} as
\begin{equation}
  \tilde{G} (\tilde{x}) =\mathbb{E}_{\tilde{\theta} \sim \tilde{g}
  (\cdot)} \left[\tilde{f} (\tilde{\tau} (\tilde{\theta}, \tilde{x}))\right],
  \label{2} 
\end{equation}
where $\tilde{f}$ is the ``augmented target classifier'' that is equivalent to the original classifier when constrained on the original input $x$, which means $\tilde{f}(\tilde{x}) = f(x)$.  Note that now all the functions are augmented for a $d$-dimensional input. We further augment our surrogate neural network to represent the augmented transformation
$\tilde{\tau}:\mathbb{R}^{d}\rightarrow \mathbb{R}^{d}$ as
\begin{equation}
  \tilde{\tau} (\tilde{\theta}, \tilde{x}) = \tilde{H} (\tilde{F_1}
  (\tilde{\theta}) + \tilde{F_2} (\tilde{x})),
\end{equation} 
where $\tilde{H}(\cdot), \tilde{F_1}(\cdot), \tilde{F_2}(\cdot): \mathbb{R}^d \to \mathbb{R}^d$ are parts of the augmented surrogate model. 
By carefully designing the interaction between the augmented parameters and
the original parameters, we could turn the transformation to a resolvable one. It does not change the original surrogate model when constraining to the original input $x$ and $\theta$.
Specifically, we design $\tilde{F_1}$, $\tilde{F_2}$ and $\tilde{H}$ as follows:
\begin{equation}
\begin{split}
  &\tilde{F_1} (\tilde{\theta}) = \left( \begin{array}{c}
    \theta\\
    F_{1} (\theta) + \theta'
  \end{array} \right), \tilde{F_2} (\tilde{x}) = \left( \begin{array}{c}
    \boldsymbol{0}_{d-n}\\
    F_{2} (x)
  \end{array} \right), \\
  &\tilde{H}(\tilde{x}) = 
  \begin{bmatrix}
  I_{d-n} & \\
  & H(x)  
  \end{bmatrix}.
\end{split}
\end{equation} 
Before stating our main theorem, to simplify the notations, we set
\begin{equation}
\begin{split}
    \tilde{z}_{\xi} &= \tilde{F}_1 (\tilde{\xi}) + \tilde{F}_2 (\tilde{x}),\quad \tilde{z}_{\theta} = \tilde{F_1} (\tilde{\theta}) + \tilde{F_2}(\tilde{x}),\\
    \tilde{y}_{\xi} &= (y_{\xi}', y_{\xi})^\top = \tilde{H} (\tilde{F_1} (\tilde{\xi}) + \tilde{F}_2 (\tilde{x})),\\ \tilde{y}_{\theta} &= (y_{\theta}', y_{\theta})^\top = \tilde{H}(\tilde{F_1} (\tilde{\theta}) + \tilde{F}_2 (\tilde{x})).
\end{split}
\end{equation}
Then we provide our main theoretical result below, i.e., the GSmooth classifier is certifiably robust within a given range.

\begin{theorem}[\textbf{Certified bound for Non-resolvable Transformations}, proof in Appendix \ref{app_thm_2}]
\label{thm-2}
For any classifier $f(x)$, let $\tilde{G} (\tilde{x})$ be the smoothed classifier defined in Eq.~(\ref{2}). If there exist $\underline{p_A}$ and $\overline{p_B}$ satisfying
  \[ \tilde{G} (\tilde{x})_A \geqslant \underline{p_A} \geqslant
     \overline{p_B} \geqslant \tilde{G} (\tilde{x})_B, \]
  then $ \margmax_{i \in \mathcal{Y}} \tilde{G} (\tilde{\tau}
  (\tilde{\xi}, \tilde{x}))_i = y_A$ holds for any $\| \xi \|_2 \leqslant R$, where
  \begin{equation}\label{eq:the2-r}
    R = \frac{1}{2M^{\ast}} \int_{\overline{p_B}}^{\underline{p_A}} \frac{1}{\Phi(p)}\mathd p,
  \end{equation}
  and the coefficient $M^*$ is defined as
  \begin{equation}
    M^{\ast} = \max_{\xi, \theta \in P} \sqrt{1 +
    \left\| \frac{\partial F_{2} (y_{\xi})}{\partial
    \xi} - \frac{\partial F_1 (\theta)}{\partial \theta} \right\|_2^2}.
    \label{3}
  \end{equation}
\end{theorem}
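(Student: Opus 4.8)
The plan is to reduce the non-resolvable case to the resolvable setting of Theorem~\ref{thm-1} by working in the augmented space $\mathbb{R}^d$, so that the certified radius follows from the same integral formula with a modified coefficient $M^*$. First I would verify that the augmented transformation $\tilde\tau$ is resolvable in the sense of the composition identity $\tilde\tau(\tilde\theta,\tilde\tau(\tilde\xi,\tilde x)) = \tilde\tau(\tilde\gamma(\tilde\theta,\tilde\xi),\tilde x)$ for an appropriate $\tilde\gamma$. The key observation driving the construction of $\tilde F_1,\tilde F_2,\tilde H$ is that the first $d-n$ coordinates of $\tilde H(\tilde F_1(\tilde\theta)+\tilde F_2(\tilde x))$ equal $\theta$ (since $\tilde F_2$ has zeros there, $\tilde H$ acts as identity there, and the top block of $\tilde F_1$ is $\theta$), so the "semantic residual'' that would otherwise be non-resolvable gets absorbed into the extra noise coordinate $\theta'$: one has $\tilde F_1(\tilde\theta)+\tilde F_2(\tilde x) = (\theta,\; F_1(\theta)+\theta'+F_2(x))^\top$, and the extra slack $\theta'$ can always be chosen to realize the required composed parameter. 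Concretely I would exhibit $\tilde\gamma$ and check the identity coordinate-by-coordinate; the additive structure inside $\tilde H(\cdot)$ is what makes this go through.

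Next I would transfer the hypothesis of Theorem~\ref{thm-1}: I must produce a matrix function $\tilde M(\tilde\theta,\tilde\xi)$ with $\partial_{\tilde\xi}\tilde\gamma = \partial_{\tilde\theta}\tilde\gamma\,\tilde M$. Using the explicit $\tilde\gamma$ from the previous step, this is a Jacobian computation; the block-triangular shape of $\tilde F_1,\tilde F_2,\tilde H$ makes $\tilde M$ itself block-structured, with an identity block coming from the $\theta$-coordinates and a block of the form $\frac{\partial F_2(y_\xi)}{\partial\xi} - \frac{\partial F_1(\theta)}{\partial\theta}$ coming from the semantic coordinates — this is precisely the quantity appearing in Eq.~(\ref{3}). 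Then $\|\tilde M\|_2 = \sqrt{1 + \|\frac{\partial F_2(y_\xi)}{\partial\xi} - \frac{\partial F_1(\theta)}{\partial\theta}\|_2^2}$ after taking the operator norm of the block matrix, so $\tilde M^* = \max_{\xi,\theta\in P}\|\tilde M\| = M^*$ as defined. I would also note that since $\tilde f(\tilde x)=f(x)$ and the noise distribution $\tilde g$ factorizes, the top/runner-up probabilities of $\tilde G$ at $\tilde x$ are well-defined and the hypothesis $\tilde G(\tilde x)_A \ge \underline{p_A}\ge\overline{p_B}\ge\tilde G(\tilde x)_B$ is exactly what Theorem~\ref{thm-1} needs.

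With resolvability, the matrix identity, and the probability gap in hand, applying Theorem~\ref{thm-1} in $\mathbb{R}^d$ directly yields $\margmax_i \tilde G(\tilde\tau(\tilde\xi,\tilde x))_i = y_A$ for all $\|\tilde\xi\|\le \frac{1}{2M^*}\int_{\overline{p_B}}^{\underline{p_A}}\frac{1}{\Phi(p)}\,\mathd p$, where $\Phi$ is computed with respect to $\tilde g$. Since $\tilde\xi = (\xi,0)^\top$ embeds the original perturbation with $\|\tilde\xi\|_2 = \|\xi\|_2$, this gives the claimed radius $R$ for the original $\|\xi\|_2 \le R$; the last small point to check is that $\margmax_i\tilde G(\tilde\tau(\tilde\xi,\tilde x))_i$ agrees with the prediction on the original transformed input, which follows from $\tilde f(\tilde x)=f(x)$ and the identity action of $\tilde H$ on the padded coordinates. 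I expect the main obstacle to be the first step: pinning down the correct $\tilde\gamma$ and verifying the composition identity, because "resolvability after augmentation'' hinges on showing that the extra noise coordinate $\theta'$ can simultaneously absorb the non-resolvable residual for every pair $(\theta,\xi)$ while keeping the construction consistent with the marginal distribution $g'$ — this is where the dimension count $d\ge m+n$ and the choice of $g'$ actually get used, and it requires care that the "absorption'' is a genuine change of variables rather than merely a formal substitution.
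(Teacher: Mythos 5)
Your high-level plan—augment so that a chain-rule Jacobian identity of the Theorem~\ref{thm-1} type holds, compute the resulting $\tilde M$, and reuse the Theorem~\ref{thm-1} bound—is exactly the spirit of the paper's proof. But the way you package it as ``verify resolvability, then invoke Theorem~\ref{thm-1} as a black box'' contains a genuine gap at precisely the spot you flag as the hard part.

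First, the augmented transformation is \emph{not} resolvable in the paper's sense. Working out the composition gives $\tilde\tau(\tilde\theta,\tilde\tau(\tilde\xi,\tilde x)) = \bigl(\theta,\,H(F_1(\theta)+\theta'+F_2(y_\xi))\bigr)^\top$, so the composed parameter that would make this equal $\tilde\tau(\tilde\gamma,\tilde x)$ is $\tilde\gamma=\bigl(\theta,\,\theta'+F_2(y_\xi)-F_2(x)\bigr)^\top$, which depends on $x$ (through both $F_2(x)$ and $y_\xi$). This violates the definition $\gamma:P\times P\to P$, so Theorem~\ref{thm-1} does not literally apply. The paper avoids this by never invoking resolvability of $\tilde\tau$: it goes straight to the chain-rule identity $\frac{\partial\tilde\tau(\tilde\theta,\tilde y_\xi)}{\partial\tilde\xi}=\frac{\partial\tilde\tau(\tilde\theta,\tilde y_\xi)}{\partial\tilde\theta}\,\tilde M(\tilde\xi,\tilde\theta)$, obtained by inserting $\bigl[\tilde F_1'(\tilde\theta)\bigr]\bigl[\tilde F_1'(\tilde\theta)\bigr]^{-1}$ (the lower-unitriangular block structure of $\tilde F_1'$ is what makes the inverse exist), and then integrates by parts exactly as in the proof of Theorem~\ref{thm-1}. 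Your argument can be rescued by observing that the proof of Theorem~\ref{thm-1} never actually uses the $x$-independence of $\gamma$, but that is a non-trivial refinement you would need to state and justify; you cannot simply cite the theorem.

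Second, even after the Jacobian identity is in hand, a naive application of the Theorem~\ref{thm-1} bound uses $M^*=\max\|\tilde M\|$ with the operator norm taken over all of $\mathbb{R}^{d+n}$. The full $\tilde M$ has a nontrivial lower-right block $F_{22}'(y_\xi)H_2'(z_\xi)$, so $\|\tilde M\|_2$ is \emph{not} $\sqrt{1+\|\partial_\xi F_2(y_\xi)-\partial_\theta F_1(\theta)\|_2^2}$ as you wrote. The stated $M^*$ only emerges after the projection step: because the attacker perturbs only $\xi$ and the augmented coordinate $\theta'$ is virtual, one restricts to directions $\tilde u=(u,0)^\top$, and $\tilde M\tilde u$ then has the tall block form $\bigl(I;\ \partial_\xi F_2(y_\xi)-\partial_\theta F_1(\theta)\bigr)^\top$, whose norm is the stated one. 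Your proposal silently assumes this restricted form; you need to make the projection explicit (and argue that bounding $\langle\nabla\tilde G,\tilde u\rangle$ only over such $\tilde u$ suffices, since the perturbation path lives in that subspace) to arrive at the tight $M^*$ in Eq.~\eqref{3}.
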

We have several observations from the theorem. First, it is noted that the certified radius is
similar to the result in Theorem \ref{thm-1}. Second, compared with resolvable
transformations, we need to introduce a new type of noise when constructing the
GSmooth classifier. This isotropic noise has the same dimension
as the data and is added to the intermediate layers of surrogate neural
networks. The theoretical explanation behind this is that this isotropic noise allows the Jacobian matrix of the semantic transformation to be invertible, which is crucial for the proof. Third, we find that the coefficient $M^{\ast}$ depends on the
norm of the difference of two Jacobian matrices and is independent of the target classifier. Later we will discuss the
meaning of this term in detail.

Before diving into our theoretical insight and proof of the theorem, we
introduce a specific case of Theorem \ref{thm-2} which is convenient for practical
usage. We empirically found that taking a linear transformation as $F_1
(\theta) = A_1 \theta + b_1$ where $A_1 \in \mathbb{R}^{n \times m}$, $b_1 \in
\mathbb{R}^n$ does not sacrifice the precision of the surrogate network. So, we have $\frac{\partial F_1 (\theta)}{\partial \theta} = A_1$. After
substituting the term in Eq.~(\ref{3}), we only need to optimize $\xi$ to
calculate $M^{\ast}$ which can make the bound tighter. Additionally, we use two Gaussian distributions as the noise distribution and the augmented noise distribution, i.e., $g(\theta)=\mathcal{N}(0,\sigma_1^2I)$ and $g'(\theta')=\mathcal{N}(0,\sigma_2^2 I)$. Formally, we have the following corollary.
\begin{corollary}
\label{cor-1}
Suppose $f (x)$ is a classifier and $\tilde{G} (\tilde{x})$ is the smoothed
  classifier defined in Eq.~(\ref{2}). If the layer $F_1 (\theta)$ in the
  surrogate neural network has the following form:
  \begin{equation}
    F_1 (\theta) = A_1 \theta + b_1,
  \end{equation}
  where $A_1 {\in \mathbb{R}^{d \times m}} $, $b_1 \in \mathbb{R}^d$ are the
  parameters; and if there exist $\underline{p_A}$ and $\overline{p_B}$ satisfying
  \[ \tilde{G} (\tilde{x})_A \geqslant \underline{p_A} \geqslant
     \overline{p_B} \geqslant \tilde{G} (\tilde{x})_B, \]
  then $ \margmax_{i \in \mathcal{Y}} \tilde{G} (\tilde{\tau}
  (\tilde{\xi}, \tilde{x}))_i = y_A$ for any $\| \xi \|_2 \leqslant R$ where
  \begin{equation}
    R = \frac{1}{2 M^{\ast}} \left( \Psi \left( \underline{p_A} \right)
    - \Psi \left( \overline{p_B} \right) \right),
  \end{equation}
  where $\Psi(\cdot)$ is the inverse CDF of the standard Gaussian distribution, and the coefficient $M^*$ is defined as
  \begin{equation}
    M^{\ast} = \max_{\xi \in P} \sqrt{\frac{1}{\sigma_1^2} + \frac{1}{\sigma_2^2}
    \left\| \frac{\partial F_{2} (y_{\xi})}{\partial \xi} - A_1 \right\|_2^2}.
    \label{4}
  \end{equation}
\end{corollary}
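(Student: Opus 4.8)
The plan is to obtain Corollary~\ref{cor-1} as a direct specialization of Theorem~\ref{thm-2}, so the only real work is to evaluate the two ingredients appearing there --- the integral $\int_{\overline{p_B}}^{\underline{p_A}} \Phi(p)^{-1}\,\mathd p$ and the constant $M^\ast$ of Eq.~(\ref{3}) --- under the two extra hypotheses of the corollary: that $F_1$ is affine and that the base noise and the augmented noise are both isotropic Gaussian. First, since $F_1(\theta)=A_1\theta+b_1$, we have $\partial F_1(\theta)/\partial\theta = A_1$ identically in $\theta$, so the maximization over $\theta\in P$ in Eq.~(\ref{3}) collapses and only the maximization over $\xi$ survives; this already yields the $\max_{\xi\in P}$ form of Eq.~(\ref{4}), up to the noise-dependent weights inside the square root.

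Next I would compute the function $\Phi$ of Eq.~(\ref{eq2}) for the augmented noise $\tilde g(\tilde\theta)=g'(\theta')g(\theta)$ with $g=\mathcal N(0,\sigma_1^2 I)$ and $g'=\mathcal N(0,\sigma_2^2 I)$. Writing $\psi$ for the potential of $\tilde g$, one has $\nabla\psi(\tilde\delta)=(\delta/\sigma_1^2,\ \delta'/\sigma_2^2)$, so for a unit vector $u=(u_1,u_2)$ split along the two coordinate blocks, $\gamma_u=\langle u,\nabla\psi(\tilde\delta)\rangle$ is a centered Gaussian with variance $\|u_1\|^2/\sigma_1^2+\|u_2\|^2/\sigma_2^2$. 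The truncated first moment $\mathbb E[\gamma_u\mathbb I\{\gamma_u>\varphi_u^{-1}(p)\}]$ then has the familiar closed form $s_u\cdot\rho(\Psi(1-p))$, where $s_u$ is the standard deviation of $\gamma_u$, $\Psi$ is the inverse standard-normal CDF, and $\rho$ is the standard normal density; hence $\Phi(p)=(\max_{\|u\|=1}s_u)\,\rho(\Psi(1-p))$. The change of variables $q=1-p$ together with the identity $\tfrac{\mathd}{\mathd q}\Psi(q)=1/\rho(\Psi(q))$ turns $\int_{\overline{p_B}}^{\underline{p_A}}\Phi(p)^{-1}\,\mathd p$ into $(\max_{\|u\|=1}s_u)^{-1}\big(\Psi(\underline{p_A})-\Psi(\overline{p_B})\big)$, reproducing the inverse-CDF form of the radius in the corollary.

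Finally, I would show that the factor $(\max_u s_u)^{-1}$ coming from $\Phi$ recombines with the $M^\ast$ of Theorem~\ref{thm-2} to give exactly the stated $R$ and Eq.~(\ref{4}). The cleanest route is a change of variables that rescales the base perturbation by $\sigma_1$ and the augmented noise by $\sigma_2$, i.e. $\hat\theta=\theta/\sigma_1,\ \hat\theta'=\theta'/\sigma_2$, so that $\tilde g$ becomes the standard Gaussian on $\mathbb R^d$; then Theorem~\ref{thm-2} applies verbatim with integral value $\Psi(\underline{p_A})-\Psi(\overline{p_B})$. Under this rescaling the certified radius in the original $\xi$-coordinates picks up a factor $\sigma_1$ (because $\xi$ lives in the same space as $\theta$), while the identity block of the augmented Jacobian $\partial\tilde F_1/\partial\tilde\theta$ --- the ``$1$'' term in Eq.~(\ref{3}) --- is now measured in units of $\sigma_1$ and the residual block $\partial F_2(y_\xi)/\partial\xi-A_1$ in units of $\sigma_2$. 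Collecting these factors converts $\tfrac{\sigma_1}{2}\big(\Psi(\underline{p_A})-\Psi(\overline{p_B})\big)\big/\sqrt{1+(\sigma_1^2/\sigma_2^2)\|\partial F_2(y_\xi)/\partial\xi-A_1\|_2^2}$ into $\tfrac{1}{2M^\ast}\big(\Psi(\underline{p_A})-\Psi(\overline{p_B})\big)$ with $M^\ast$ exactly as in Eq.~(\ref{4}).

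The main obstacle is precisely this last bookkeeping step: making sure the two noise levels enter with the correct exponents and attach to the correct blocks of the augmented Jacobian --- the ``$1$'' under the square root is the squared norm of the identity block smoothed by the $\sigma_1$-noise, while $\|\partial F_2(y_\xi)/\partial\xi-A_1\|_2^2$ is the residual block smoothed by the $\sigma_2$-noise, and conflating them (e.g.\ weighting everything by a single $\sigma$) would give the wrong constant. Everything else --- the Gaussian truncated-moment computation, the inverse-CDF derivative identity, and the reduction to the resolvable case in the augmented space --- is already contained in the proof of Theorem~\ref{thm-1} and Theorem~\ref{thm-2}, so no new estimates are needed.
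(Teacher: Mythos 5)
The paper itself gives no standalone proof of Corollary~\ref{cor-1}; it is asserted to follow from Theorem~\ref{thm-2} once $F_1$ is affine and the noise is the product of two isotropic Gaussians. Your plan of ``direct specialization'' therefore sounds natural, but it hides a genuine gap, and the way you paper over it in your third paragraph is not rigorous. If you plug the anisotropic noise $\tilde g=\mathcal N(0,\sigma_1^2 I)\otimes\mathcal N(0,\sigma_2^2 I)$ into the \emph{stated} radius of Theorem~\ref{thm-2}, you must compute $\Phi$ by maximizing the standard deviation $s_u$ of $\gamma_u$ over all unit $u$, which gives $\max_u s_u = 1/\min(\sigma_1,\sigma_2)$, hence $\int_{\overline{p_B}}^{\underline{p_A}}\Phi(p)^{-1}\mathd p = \min(\sigma_1,\sigma_2)\bigl(\Psi(\underline{p_A})-\Psi(\overline{p_B})\bigr)$, while the Jacobian coefficient stays $M^\ast_{\mathrm{Thm}\,2}=\max_\xi\sqrt{1+\|B_\xi\|_2^2}$ with $B_\xi=\partial F_2(y_\xi)/\partial\xi-A_1$. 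The product $\min(\sigma_1,\sigma_2)/\sqrt{1+\|B_\xi\|_2^2}$ is \emph{strictly smaller} than $1/\sqrt{1/\sigma_1^2+\|B_\xi\|_2^2/\sigma_2^2}$ whenever $\sigma_1\neq\sigma_2$, so the substitution does not reproduce Eq.~(\ref{4}); it gives a weaker bound. The reason is that the proof of Theorem~\ref{thm-2} is lossy at exactly the step where it peels off $\|\tilde M\tilde u\|\leqslant M^\ast$ and then maximizes over an unconstrained unit vector $v$ inside the $\Phi$ estimate, discarding the alignment between the specific vector $\tilde M\tilde u$ and the anisotropy of the noise.

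To get the constant in the corollary you must reopen the proof, and the affine hypothesis on $F_1$ is what makes that possible: since $\partial F_1/\partial\theta=A_1$ is constant, $\tilde M$ no longer depends on $\theta$, so after integration by parts the contraction vector $w=\tilde M(\tilde\xi)\tilde u=(u,\,B_\xi u)$ is fixed, and $\langle\nabla\tilde\psi(\tilde\theta),w\rangle$ is a one-dimensional centered Gaussian with variance exactly $\|u\|^2/\sigma_1^2+\|B_\xi u\|^2/\sigma_2^2\leqslant 1/\sigma_1^2+\|B_\xi\|_2^2/\sigma_2^2=(M^\ast_{\mathrm{cor}})^2$. Bounding the truncated moment then gives $|\langle\nabla_{\tilde\xi}\tilde G,\tilde u\rangle|\leqslant M^\ast_{\mathrm{cor}}\rho\bigl(\Psi(\tilde G)\bigr)$ directly, and integrating the resulting differential inequality yields Eq.~(\ref{4}) in one stroke. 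Your rescaling $\hat\theta=\theta/\sigma_1$, $\hat\theta'=\theta'/\sigma_2$ is morally the same move, but as written it is applied to the \emph{final} radius formula, and the two claims it rests on --- that the certified $\xi$-radius ``picks up a factor $\sigma_1$'' and that the rescaled residual becomes $(\sigma_1/\sigma_2)B_\xi$ --- are asserted rather than derived from the chain of Jacobians in Theorem~\ref{thm-2}'s proof. You should either carry the change of variables through the integral computation of $\nabla_{\tilde\xi}\tilde G$ (where the diagonal scaling matrix gets absorbed into $\tilde M$, immediately producing $M^\ast_{\mathrm{cor}}$) or use the fixed-$w$ argument above; as it stands, the proposal identifies the right answer but does not establish it.
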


\section{Proof Sketch and Theoretical Analysis}
In this section, we briefly summarize the main idea for proving Theorem~\ref{thm-2}
and provide the theoretical insights for our GSmooth further. More details of the proof
can be found in Appendix \ref{app_thm_2}.

\subsection{Proof Sketch of Theorem~\ref{thm-2}}
The key idea is to prove that the gradient of the smoothed classifier can be bounded by a function of the classification confidence and the parameters of the noise distribution. Formally, we calculate the gradient to the perturbation parameter $\xi$ for our augmented smoothed classifier as
\begin{equation}
  \nabla_{\tilde{\xi}} \tilde{G} (\tilde{\tau} (\tilde{\xi}, \tilde{x})) =
  \nabla_{\tilde{\xi}} \mathbb{E}_{\tilde{\theta} \sim \tilde{g}
  (\tilde{\theta})} [\tilde{f} (\tilde{\tau} (\tilde{\theta}, \tilde{\tau}
  (\tilde{\xi}, \tilde{x})))] .
\end{equation}
We further expand the expectation into an integral and find that
\begin{equation}
   \nabla_{\tilde{\xi}} \tilde{G} (\tilde{\tau} (\tilde{\xi}, \tilde{x})) =
  \int_{\mathbb{R}^{ d}} \frac{\partial \tilde{f} (\tilde{\tau}
  (\tilde{\theta}, \tilde{y}_{\xi}))}{\partial 
  \tilde{\xi}}  \tilde{g} (\tilde{\theta}) \mathd
  \tilde{\theta} .
\end{equation}
The key step is to eliminate the gradient of $\frac{\partial \tilde{F}
(\tilde{\tau} (\tilde{\theta}, \tilde{y}_{\xi}))}{\partial \tilde{\xi}}$ and replace it with $\frac{\partial
\tilde{f} (\tilde{\tau} (\tilde{\theta}, \tilde{y}_{\xi}))}{\partial
\tilde{\theta}}$. Then we integrate it by parts to get the following
objective:
\begin{equation}
  \nabla_{\tilde{\xi}} \tilde{G} (\tilde{\tau} (\tilde{\xi}, \tilde{x})) = -
  \int_{\mathbb{R}^{d}} \tilde{F} (\tilde{\tau} (\tilde{\theta},
  \tilde{y}_{\xi})) \frac{\partial \big(\tilde{M}
  (\tilde{\xi}, \tilde{\theta}) \tilde{g} (\tilde{\theta})\big)}{\partial \tilde{\theta}}  \mathd
  \tilde{\theta} .
\end{equation}
After that, we can bound the gradient of the GSmooth classifier by
using a technique similar to randomized smoothing \citep{yang2020randomized}. 

\subsection{Theoretical Insights}
Next, we provide some theoretical insights for our augmentation scheme on
transformation parameters.

\begin{itemize}
    \item \textbf{Transformation space expansion.} The key purpose of the augmented noise is to construct a closed subspace using additional dimensions. In the augmented space, the Jacobian matrix of the semantic transformation becomes invertible, which is crucial for our proof.
    
    \item \textbf{Decomposition of the non-resolvable transformations.} As we can see in Eq.~(\ref{4}),
    $M^{\ast}$ is influenced by two factors. One is the standard deviation of the
    two noise distributions. The other is the norm of the Jacobian matrix
    $\frac{\partial F_{2} (y_{\xi})}{\partial \xi} - A_1$. It can be
    viewed as the residual of the non-resolvable part of the transformation. Along
    these lines, our method decomposes the unknown semantic transformation into a
    resolvable part and a residual part. The non-resolvable residual part can be
    handled by introducing an additional noise with standard deviation $\sigma_2$.
    
\end{itemize}

\subsection{Error Analysis for Surrogate Model Approximation}
In this subsection, we theoretically analyze the effectiveness of certifying a real semantic transformation due to the existence of approximation error of surrogate neural networks. 
\begin{theorem}[\textbf{Error Analysis}, proof in Appendix~\ref{app_thm_3}]
\label{thm-3}
  Suppose the simulation of the semantic transformation has a small enough error
  \[ \| \tilde{\tau} (\tilde{\xi}, \tilde{x}) - \overline{\tau} (\tilde{\xi}, \tilde{x})
     \|_2 < \varepsilon, \]
    where $\overline{\tau} (\tilde{\xi}, \tilde{x})$ is the real semantic transformation.
  Then there exists a constant ratio 
  \[A=A(\|F_1'(\tilde{\xi})\|_2, \|F_2'(\tilde{y}_\xi)\|_2,\|F_2'(\tilde{z}_\xi)\|_2)>0,\]
  which does not depend on the target classifier. The certified radius for the real semantic transformation satisfies: 
  \begin{equation}
      R_r > R (1 - A \varepsilon),
  \end{equation} 
  where $R$ is the certified radius in Eq.~\eqref{eq:the2-r} for the surrogate neural network in Theorem \ref{thm-2}.
\end{theorem}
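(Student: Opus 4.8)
The plan is to show that swapping the surrogate transformation $\tilde{\tau}$ for the true transformation $\overline{\tau}$ at inference time only shrinks the top/runner-up confidence bounds of the already-constructed GSmooth classifier by an amount of order $\varepsilon$, and then to feed these shrunk bounds into the closed-form radius of Theorem~\ref{thm-2} and expand to first order in $\varepsilon$. Concretely, in Eq.~\eqref{eq:the2-r} the radius is an increasing function of the gap $\int_{\overline{p_B}}^{\underline{p_A}}\Phi(p)^{-1}\,\mathd p$ and a decreasing function of $M^{\ast}$; since $M^{\ast}$ in Eq.~\eqref{3} depends only on the surrogate encoders $F_1,F_2$ and not on which transformation is actually applied, the only quantities that move when we certify $\overline{\tau}$ with the classifier $\tilde{G}$ built from $\tilde{\tau}$ are $\underline{p_A},\overline{p_B}$. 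So the whole statement reduces to bounding $\big|\tilde{G}(\overline{\tau}(\tilde{\xi},\tilde{x}))_i - \tilde{G}(\tilde{\tau}(\tilde{\xi},\tilde{x}))_i\big|$ uniformly over classes $i$ and over $\|\xi\|_2\le R$.

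The key device for that bound is that $\tilde{G}$ is itself a Gaussian-smoothed object, so its Lipschitz modulus is classifier-free. By the construction of $\tilde{F_1},\tilde{F_2},\tilde{H}$ one checks that $\tilde{G}(\tilde{v})=\mathbb{E}_{\theta,\theta'}\big[f\big(H(F_1(\theta)+\theta'+F_2(v))\big)\big]$ depends on $\tilde{v}$ only through its last $n$ coordinates $v$, and the inner expectation over $\theta'\sim\mathcal{N}(0,\sigma_2^2 I)$ of a quantity valued in $[0,1]$ makes $v\mapsto\tilde{G}$ Lipschitz with constant at most $\tfrac{1}{\sigma_2}\sqrt{2/\pi}\,\|F_2'\|_2$, which does not involve $f$. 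Evaluating at the transformed point and using $\|\tilde{\tau}(\tilde{\xi},\tilde{x})-\overline{\tau}(\tilde{\xi},\tilde{x})\|_2<\varepsilon$, the $\varepsilon$-error propagates through $F_2$ at the transformed output $\tilde{y}_\xi$ and — when one also re-derives the rate of change of the margin in $\xi$ as in the proof of Theorem~\ref{thm-2} — through $F_2$ at the semantic preimage $\tilde{z}_\xi$ and through $F_1$ at $\tilde{\xi}$. This yields $\big|\tilde{G}(\overline{\tau})_i-\tilde{G}(\tilde{\tau})_i\big|\le c\,\varepsilon$ with $c=c(\|F_1'(\tilde{\xi})\|_2,\|F_2'(\tilde{y}_\xi)\|_2,\|F_2'(\tilde{z}_\xi)\|_2)$, independent of the target classifier.

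Finally, certifying $\overline{\tau}$ with $\tilde{G}$ amounts to applying Theorem~\ref{thm-2} with $\underline{p_A}-c\varepsilon$ and $\overline{p_B}+c\varepsilon$ in place of $\underline{p_A},\overline{p_B}$. Since $\Phi^{-1}$ is bounded and continuous on the interval $[\overline{p_B},\underline{p_A}]$, a first-order expansion gives $R_r \ge R - \tfrac{c\varepsilon}{2M^{\ast}}\big(\Phi(\underline{p_A})^{-1}+\Phi(\overline{p_B})^{-1}\big)$, and dividing by $R=\tfrac{1}{2M^{\ast}}\int_{\overline{p_B}}^{\underline{p_A}}\Phi(p)^{-1}\,\mathd p$ produces $R_r>R(1-A\varepsilon)$ with $A$ obtained by absorbing $c$ and the (noise-only, hence classifier-free) $\Phi$ factors into a single constant $A=A(\|F_1'(\tilde{\xi})\|_2,\|F_2'(\tilde{y}_\xi)\|_2,\|F_2'(\tilde{z}_\xi)\|_2)$. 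The analogous argument in the linear-$F_1$ regime recovers the radius of Corollary~\ref{cor-1}.

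The main obstacle is making the Lipschitz estimate of the middle step genuinely classifier-independent: one must route the $\varepsilon$-error through the Gaussian smoothing in the semantic layer rather than through an (unavailable) Lipschitz constant of $f$, and keep careful track of through which occurrences of $F_2$ (at $\tilde{y}_\xi$ versus $\tilde{z}_\xi$) and of $F_1$ the error actually flows. In particular one must argue that the perturbation does not meaningfully move the Jacobian $\partial F_2(y_\xi)/\partial\xi$ that enters $M^{\ast}$ — i.e. that any such feedback is of second order and subsumed by the $c\varepsilon$ term — since the hypothesis only controls transformation values, not their derivatives. A minor technical point is excluding (or uniformly bounding $\Phi$ against) the degenerate regime where $\underline{p_A}$ or $\overline{p_B}$ approaches $0$ or $1$, where $\Phi^{-1}$ could blow up.
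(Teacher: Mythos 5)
Your high-level instinct — route the $\varepsilon$-error through the Gaussian smoothing so the bound is classifier-free — is the right one and matches the paper's spirit. But the route you take (a value-level Lipschitz bound on $\tilde{G}$, followed by shrinking $\underline{p_A},\overline{p_B}$) does not deliver the theorem as stated, because the constant you end up with is \emph{not} classifier-free. Your own final formula makes this visible: after the first-order expansion you get
\[
A \;=\; \frac{c\,\bigl(\Phi(\underline{p_A})^{-1}+\Phi(\overline{p_B})^{-1}\bigr)}{\int_{\overline{p_B}}^{\underline{p_A}}\Phi(p)^{-1}\,\mathd p},
\]
and you claim the $\Phi$ factors are ``noise-only, hence classifier-free.'' That is the gap: $\Phi(\cdot)$ is indeed a noise-only function, but it is being \emph{evaluated at} $\underline{p_A}$ and $\overline{p_B}$, which are the observed confidences of the base classifier and hence entirely classifier-dependent. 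This $A$ can blow up as $\underline{p_A}\downarrow\overline{p_B}$ (small margin) or as $\underline{p_A}\uparrow 1$ (for Gaussian $\Phi$), so it cannot be the constant claimed in the theorem. The deeper reason is structural: a value-level bound $|\tilde{G}(\overline{\tau})-\tilde{G}(\tilde{\tau})|\le c\varepsilon$ can only yield an \emph{additive} radius loss $R-R_r\lesssim c\varepsilon$; rewriting an additive loss as a multiplicative one $R(1-A\varepsilon)$ forces $A\propto 1/R$, which is classifier-dependent.

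The paper instead bounds the \emph{gradient} difference $\langle\nabla_{\tilde{\xi}}\tilde{G}(\tilde{\tau}(\tilde{\xi},\tilde{x}))-\nabla_{\tilde{\xi}}\tilde{G}(\overline{\tau}(\tilde{\xi},\tilde{x})),\tilde{u}\rangle$, expands it as an integral against $(\tilde{\tau}-\overline{\tau})$ times a mixed second derivative, and then uses the same chain-rule/integration-by-parts maneuver as in Theorem~\ref{thm-2} to turn that second derivative into $\partial/\partial\tilde{\theta}$ acting on $\tilde{f}$, so that integration by parts against $\tilde{g}$ eliminates the classifier's derivatives entirely. The payoff is that the error term comes out bounded by $\tilde{A}\varepsilon$ times \emph{the same smoothing functional} $\bigl|\int\tilde{f}\,\tilde{g}\,\langle\nabla\tilde{\psi},\tilde{u}\rangle\,\mathd\tilde{\theta}\bigr|$ that already controls $\nabla\tilde{G}$, i.e.\ it is a \emph{multiplicative} correction to the ODE rate: $M^*\mapsto M^*(1+O(\tilde{A}\varepsilon/M^*))$. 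Integrating the ODE then shrinks $R$ by the same multiplicative factor, and $A=\tilde{A}/M^*$ depends only on $\|F_1'\|,\|F_2'\|$, as required. If you want to keep a value-level argument, you would at minimum have to also re-derive the gradient bound for $\tilde{G}\circ\overline{\tau}$ and show the $\varepsilon$-perturbation scales the Lipschitz modulus rather than just shifting endpoints — which is effectively what the paper's integration-by-parts manipulation accomplishes.

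One smaller point: ``applying Theorem~\ref{thm-2} with $\underline{p_A}-c\varepsilon,\overline{p_B}+c\varepsilon$'' is a slightly misleading shorthand, since the theorem's conclusion is about $\tilde{G}\circ\tilde{\tau}$ no matter what $p_A,p_B$ you feed it. What you actually need is the quantitative form of the ODE argument (the margin of $\tilde{G}\circ\tilde{\tau}$ stays $\ge 2c\varepsilon$ up to some radius) combined with your Lipschitz bound. That fix is fine, but it does not repair the classifier-dependence of $A$ above.
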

We find that the reduction of the certified radius is influenced by two factors. The first one is the approximation error $\epsilon$ between the surrogate transformation and the real semantic transformation. The second one, the ratio $A$, is about the norm of the Jacobian matrix for some layers of the surrogate model. This is also an inherent property of the semantic transformation itself and does not depend on the target classifier.

\section{Experiments}
\label{5}
In this section, we conduct extensive experiments to show the effectiveness of our GSmooth on various types of semantic transformations. 

\subsection{Experimental Setup and Evaluation Metrics}
\label{5-1}

\begin{table*}[!t]
\centering
\footnotesize
\caption{Our main results of certified robust accuracy on several datasets and multiple types of semantic transformations. -- or $0.0$\% means the method fails to certify this type of semantic transformation.}

\setlength{\tabcolsep}{3pt}
\begin{tabular}{l|ccl|ccccccc}
\hline
\multirow{3}{*}{Transformation} & \multirow{3}{*}{Type} & \multirow{3}{*}{Dataset} & \multirow{3}{*}{ Certified Radius} & \multicolumn{7}{c}{Certified Accuracy ($\%$)} \\
 &  &  &  & \textbf{GSmooth} & TSS & DeepG & Interval & VeriVis & Semanify- & IndivSPT/ \\
 &  &  &  & \textbf{(Ours)} &  &  &  &  & NN & distSPT \\ \hline
\multirow{3}{*}{Gaussian Blur} & \multirow{3}{*}{Additive} & MNIST & $\|\alpha\|_2<6$ & \textbf{91.0} & 90.6 & -- & -- & -- & -- & -- \\
 &  & CIFAR-10 & $\|\alpha\|_2<4$ & \textbf{67.4} & 63.6 & -- & -- & -- & -- & -- \\
 &  & CIFAR-100 & $\|\alpha\|_2<4$ & \textbf{22.1} & 21.0 & -- & -- & -- & -- & -- \\ \hline
\multirow{3}{*}{Translation} & \multirow{3}{*}{Additive} & MNIST & $\|\alpha\|_2<8$ & 98.7 & \textbf{99.6} & 0.0 & 0.0 & 98.8 & 98.8 & \textbf{99.6} \\
 &  & CIFAR-10 & $\|\alpha\|_2<20$ & \textbf{82.2} & 80.8 & 0.0 & 0.0 & 65.0 & 65.0 & 78.8 \\
 &  & CIFAR-100 & $\|\alpha\|_2<20$ & \textbf{42.2} & 41.3 & -- & -- & 24.2 & 24.2 & -- \\ \hline
\multirow{3}{*}{\shortstack[l]{Brightness\\Contrast}} & \multirow{3}{*}{Resolvable} & MNIST & $\|\alpha\|_{\infty}<0.5$ & \textbf{97.7} & 97.6 & $\leqslant$0.4 & 0.0 & -- & $\leqslant$74 & -- \\
 &  & CIFAR-10 & $\|\alpha\|_{\infty}<0.4$ & \textbf{82.5} & 82.4 & 0.0 & 0.0 & -- & -- & -- \\
 &  & CIFAR-100 & $\|\alpha\|_{\infty}<0.4$ & \textbf{42.3} & 41.4 & 0.0 & 0.0 & -- & -- & -- \\ \hline
\multirow{3}{*}{Rotation} & \multirow{3}{*}{Non-resolvable} & MNIST & $\|\alpha\|_2<50^\circ$ & 95.7 & \textbf{97.4} & $\leqslant$85.8 & $\leqslant$6.0 & -- & $\leqslant$92.48 & $\leqslant$76 \\
 &  & CIFAR-10 & $\|\alpha\|_2<10^\circ$ & 65.6 & \textbf{70.6} & 62.5 & 20.2 & -- & $\leqslant$49.37 & $\leqslant$34 \\
 &  & CIFAR-100 & $\|\alpha\|_2<10^\circ$ & 33.2 & \textbf{36.7} & 0.0 & 0.0 & -- & $\leqslant$21.7 & $\leqslant$18 \\ \hline
\multirow{3}{*}{Scaling} & \multirow{3}{*}{Non-resolvable} & MNIST & $\|\alpha\|_2 < 0.3$ & 95.9 & \textbf{97.2} & 85.0 & 16.4 & -- & -- & -- \\
 &  & CIFAR-10 & $\|\alpha\|_2<0.3$ & 54.3 & \textbf{58.8} & 0.0 & 0.0 & -- & -- & -- \\
 &  & CIFAR-100 & $\|\alpha\|_2<0.3$ & 31.2 & \textbf{37.8} & 0.0 & 0.0 & -- & -- & -- \\ \hline
\multirow{3}{*}{Rotational Blur} & \multirow{3}{*}{Non-resolvable} & MNIST & $\|\alpha\|_2<10$ & \textbf{95.9} & -- & -- & -- & -- & -- & -- \\
 &  & CIFAR-10 & $\|\alpha\|_2<10$ & \textbf{39.7} & -- & -- & -- & -- & -- & -- \\
 &  & CIFAR-100 & $\|\alpha\|_2<10$ & \textbf{27.2} & -- & -- & -- & -- & -- & -- \\ \hline
\multirow{3}{*}{Defocus Blur} & \multirow{3}{*}{Non-resolvable} & MNIST & $\|\alpha\|_2<5$ & \textbf{89.2} & -- & -- & -- & -- & -- & -- \\
 &  & CIFAR-10 & $\|\alpha\|_2<5$ & \textbf{25.0} & -- & -- & -- & -- & -- & -- \\
 &  & CIFAR-100 & $\|\alpha\|_2<5$ & \textbf{13.1} & -- & -- & -- & -- & -- & -- \\ \hline
\multirow{3}{*}{Zoom Blur} & \multirow{3}{*}{Non-resolvable} & MNIST & $\|\alpha\|_2<0.5$ & \textbf{93.9} & -- & -- & -- & -- & -- & -- \\
 &  & CIFAR-10 & $\|\alpha\|_2<0.5$ & \textbf{44.6} & -- & -- & -- & -- & -- & -- \\
 &  & CIFAR-100 & $\|\alpha\|_2<0.5$ & \textbf{14.2} & -- & -- & -- & -- & -- & -- \\ \hline
\multirow{3}{*}{Pixelate} & \multirow{3}{*}{Non-resolvable} & MNIST & $\|\alpha\|_2<0.5$ & \textbf{87.1} & -- & -- & -- & -- & -- & -- \\
 &  & CIFAR-10 & $\|\alpha\|_2<0.5$ & \textbf{45.3} & -- & -- & -- & -- & -- & -- \\
 &  & CIFAR-100 & $\|\alpha\|_2<0.5$ & \textbf{30.2} & -- & -- & -- & -- & -- & -- \\ \hline
\end{tabular}
\label{tab-1}
\end{table*}

\begin{table}[!t]
\footnotesize
\caption{Results of empirical accuracy under adaptive attack on CIFAR-10 (PGD using EoT). For comparison, we also list the results of the corresponding certified accuracy in Table \ref{tab-1}.}

\setlength{\tabcolsep}{2.8pt}
\begin{tabular}{l|cp{8ex}<{\centering}c}
\hline
\multirow{2}{*}{Type} & \multicolumn{1}{c|}{Certified Acc. (\%)} & \multicolumn{2}{c}{Adaptive Attack Acc. (\%)} \\ \cline{2-4} 
                      & \multicolumn{2}{c|}{GSmooth}                            & Vanilla            \\ \hline
Gaussian blur         & 67.4                             & 68.1             & 3.4              \\
Translation           & 82.2                             & 87.5             & 4.2              \\
Brightness            & 82.5                             & 85.9             & 9.6              \\
Rotation              & 65.6                             & 68.4             & 65.4             \\
Rotational blur       & 39.7                             & 45.0             & 33.1             \\
Defocus blur          & 25.0                             & 25.5             & 16.6             \\
Pixelate              & 45.3                             & 49.2             & 38.2             \\ \hline
\end{tabular}
\label{ada_result}
\end{table}

\begin{table*}[th]
\centering
\caption{Results of ablation study on the influence of standard deviation of smoothing distributions, i.e., transformation noise $\sigma_1$ and augmented noise $\sigma_2$ for certification accuracy on rotational blur.}
\footnotesize
\begin{tabular}{c|c|cccc|ccccc}
\hline
Cert Acc. (\%) & \multicolumn{5}{c|}{CIFAR-10}              & \multicolumn{5}{c}{CIFAR-100}                                   \\ \hline
\multirow{5}{*}{Rotational Blur} &
  \diagbox{$\sigma_1$}{$\sigma_2$} &
  0.05 &
  0.10 &
  0.15 &
  0.25 &
  \multicolumn{1}{c|}{\diagbox{$\sigma_1$}{$\sigma_2$}} &
  0.05 &
  0.10 &
  0.15 &
  0.20 \\ \cline{2-11} 
         & 0.1  & 44.3 & 46.4          & 35.5 & 16.1 & \multicolumn{1}{c|}{0.1}  & 23.1 & 26.2          & 17.2 & 10.4 \\
         & 0.25 & 45.7 & 47.1          & 38.3 & 18.9 & \multicolumn{1}{c|}{0.25} & 23.2 & \textbf{27.2} & 20.3 & 11.3 \\
         & 0.5  & 46.5 & \textbf{48.4} & 38.8 & 18.3 & \multicolumn{1}{c|}{0.5}  & 22.2 & 26.1          & 18.6 & 11.8 \\
         & 0.75 & 42.1 & 45.5          & 36.6 & 17.0 & \multicolumn{1}{c|}{0.75} & 24.0 & 25.5          & 18.3 & 13.2 \\ \hline
\end{tabular}
\label{tab-3}
\end{table*}

We use MNIST \citep{726791}, CIFAR-10, and CIFAR-100 \citep{krizhevsky2009learning} datasets to verify our methods. We train a ResNet with 110 layers \citep{he2016deep} from scratch. Similar to prior work, we apply moderate data augmentation \citep{cohen2019certified} to improve the generalization of the classifier. For the surrogate image-to-image translation model for simulating semantic transformations, we adopt the U-Net architecture \citep{ronneberger2015u} for $\tilde{H}(\cdot)$ and several simple convolutional or linear layers for $\tilde{F_1}(\cdot)$ and $\tilde{F_2}(\cdot)$. All models are trained by using an Adam optimizer~\cite{adam} with an initial learning rate of 0.001 that decays every 50 epochs until convergence. The algorithms for calculating $M^\ast$ and other details in our experiments are listed in Appendix \ref{appendix2} due to limited space. The evaluation metric is the certified accuracy, which is the percentage of samples that are correctly classified with a larger certified radius than the given range. We use $\|\alpha\|$ to indicate the preset certified radius. The results are evaluated on the real semantic transformations after considering the error correction in Theorem \ref{thm-3}.

\subsection{Main Results for Certified Robustness}
\label{5-2}

To demonstrate the effectiveness of our GSmooth on certifying complex semantic transformations, we compare the results of our GSmooth with several baselines, including \emph{randomized smoothing} for some simple semantic transformations of  \textbf{TSS}~\citep{li2021tss} and \textbf{IndivSPT/distSPT}~\citep{fischer2020certified}. Our GSmooth is a natural and powerful extension of their methods. We also compare our method with \emph{deterministic certification} approaches, including \textbf{DeepG}~\citep{balunovic2019certifying}, which uses linear relaxations similar to~\citet{wong2018provable},  \textbf{Interval}~\citep{singh2019abstract}, which is based on interval bound propagation,  \textbf{VeriVis}~\citep{pei2017towards}, which enumerates all possible outcomes for semantic transformations with finite values of parameters, and \textbf{Semanify-NN}~\citep{mohapatra2020towards}, which uses a new preprocessing layer to turn the problem into a $\ell_p$ norm certification.

We measure the certified robust accuracy for different semantic transformations on different datasets in Table \ref{tab-1}. We have the following observations based on the experimental results. First, only our method achieves non-zero accuracy in certifying some complex semantic transformations, including rotational blur, defocus blur, zoom blur, and pixelate. The results verify Theorem \ref{thm-2}. This is a breakthrough that greatly extends the range of applicability for methods based on randomized smoothing. Second, we see that the performance of GSmooth is similar to state-of-the-art randomized smoothing approaches (e.g., TSS) on several simple semantic transformations such as Gaussian blur and translation. This is a natural result because our method works similarly for resolvable transformations. For two specific non-resolvable transformations, i.e., rotation and scaling, our accuracy is slightly lower. The possible reason is that, in TSS \citep{li2021tss}, they derive a more elaborate Lipschitz bound for rotation, which is better than us. Third, those inherently non-resolvable transformations such as image blurring (except Gaussian blur) and pixelate are more difficult than resolvable or approximately resolvable (rotation) transformations, which makes their certified accuracy lower than the resolvable ones.


\subsection{Results for Empirical Robustness}
To demonstrate that the certified radius is tight enough for real applications, we conduct empirical robustness experiments by testing our method under two types of corruptions. First, we apply project gradient descent (PGD)~\cite{madry2018towards} using Expectation over Transformation (EoT)~\citep{Athalye2018Obfuscated} and evaluate the accuracy under adversarial examples on CIFAR-10. Second, we measure the empirical performance of GSmooth under common image corruptions such as CIFAR-10-C \citep{hendrycks2019benchmarking}. We choose AugMix \citep{hendrycks2019augmix} and TSS \citep{li2021tss} as baselines. For evaluation, we choose subsets of CIFAR-10-C that are attacked by deterministic corruptions.

\begin{table}[!t]
\caption{Empirical accuracy (\%) on subsets of CIFAR-10-C. Results with the best performance are bolded. }
\footnotesize
\centering
\begin{tabular}{l|ccc}
\hline
Type & AugMix          & TSS    & GSmooth   \\ \hline
Gaussian blur      & 67.4          & 75.8 & \textbf{76.0} \\
Brightness         & \textbf{82.4} & 71.8 & 72.1          \\
Defocus blur       & 72.2          & 75.6 & \textbf{76.8} \\
Zoom blur          & 70.8          & 75.2 & \textbf{77.1} \\
Motion blur        & 68.6          & 70.2 & \textbf{70.5} \\
Pixelate           & 50.9          & 76.0 & \textbf{76.7} \\ \hline
\end{tabular}
\label{cifar-c_result}
\end{table}

The performance under adaptive attack is shown in Table~\ref{ada_result}. From the results, we find that the empirical accuracy is always higher than the certified accuracy, which indicates the tightness of the certified bound for GSmooth. Additionally, GSmooth also serves as a strong empirical defense compared with a vanilla neural network. The results on CIFAR-10-C are listed in Table \ref{cifar-c_result}. We see that our method achieves improvement on most types of corruptions. Since the corruptions in CIFAR-10-C are randomly produced and model-agnostic, the overall performance is better than adaptive attacks. In summary, these empirical results show that the certified bound is tight enough and that GSmooth can also be used as a strong empirical defense method.


\begin{figure}
    \centering
    \hbox{\includegraphics[width=0.99\linewidth,page=1]{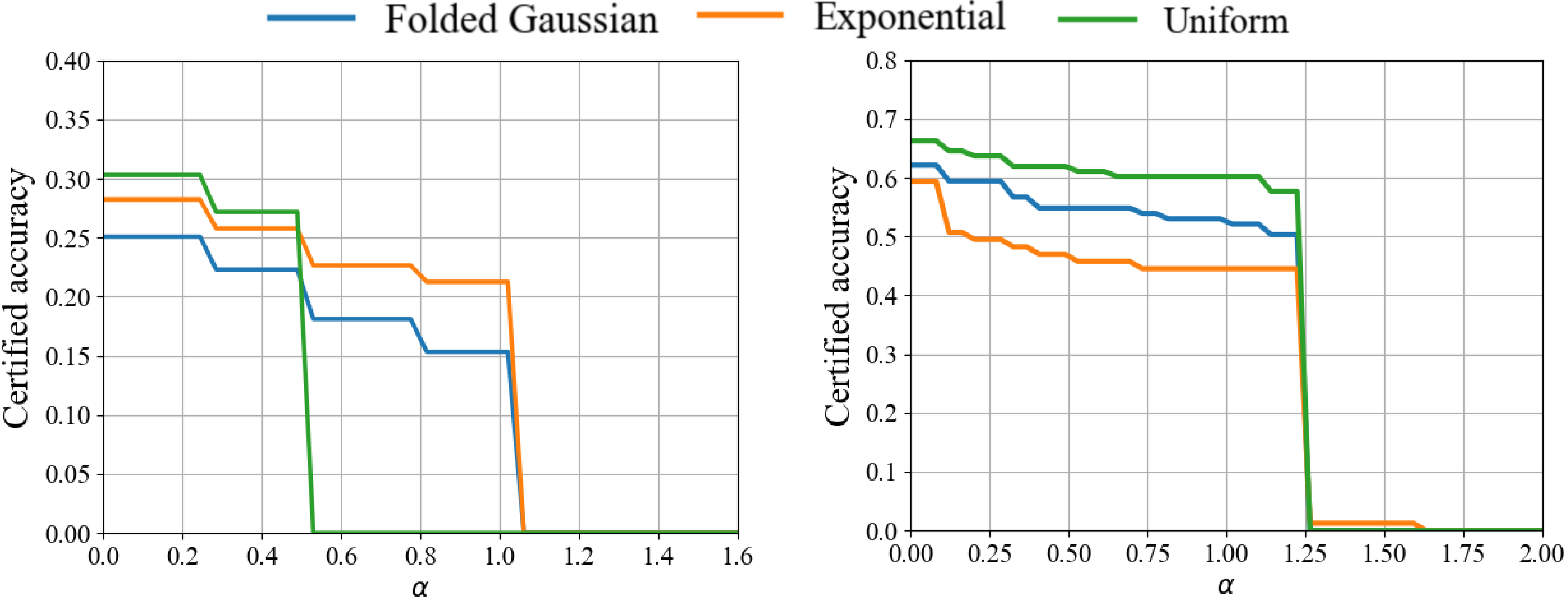}}
    \caption{Results of ablation experiments on the influence of smoothing distribution for zoomed blur on the CIFAR-10 dataset. The horizontal axis $\|\alpha\|_2$ is the certified radius. }
    \label{ab1}
\end{figure}


\subsection{Ablation Studies}
\label{5-3}
\textbf{Ablation study on the influence of noise distribution.}
The choice of noise distribution is important for randomized smoothing based methods. Since our GSmooth can certify different types of semantic transformations that exhibit different properties, it is necessary to 
understand the influence of different smoothing distributions for different semantic transformations. We choose (folded) Gaussian, uniform, and exponential distribution and compare the certified accuracy on zoom blur transformation for both CIFAR-10 and CIFAR-100 datasets. As shown in Fig.~\ref{ab1}, we found that the impact of smoothing distributions depends on the dataset. On average, uniform distribution is better for small radius certification, while exponential distribution is more suitable for certifying a large radius.

\begin{figure}[!t]
    \centering
    \includegraphics[width=8cm]{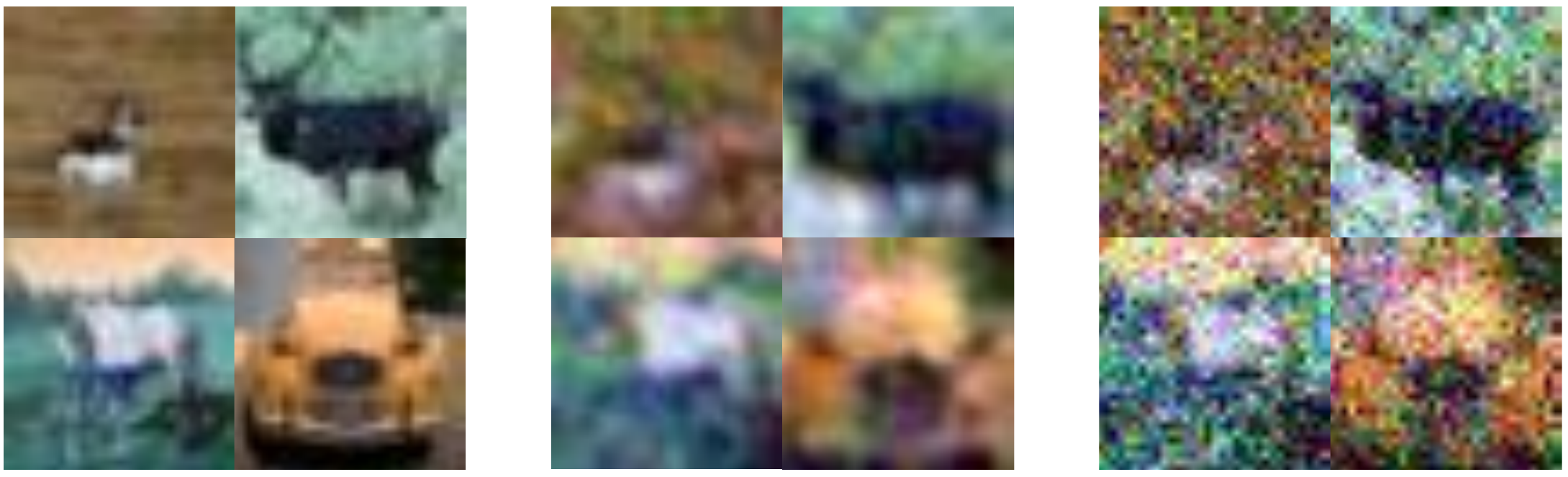}
    \caption{Visualization of difference between the augmented noise in the semantic layers and the noise on raw images. Left: original images from CIFAR-10. Middle: images with augmented noise of $\sigma_2=0.2$. Right: images with additive Gaussian noise $\sigma=0.2$.}
    \label{fig3}
\end{figure}
\textbf{Ablation study on the influence of noise variance for certification.}
Since our GSmooth contains two different variances for controlling the resolvable part and the residual part for a non-resolvable semantic transformation, here we investigate the effect of different noise variance on the certified accuracy. The results are shown in Table \ref{tab-3}. We found that using medium transformation noise and augmented noise achieves the best certified accuracy. This fact is consistent with results in \citet{cohen2019certified}. An explanation is that there is a trade-off: higher noise variance decreases the coefficient $M^\ast$, but it might also degrade the clean accuracy.

\textbf{Visualization experiments: comparison between augmented semantic noise and image noise.}
Our GSmooth needs to add a new noise in the semantic layers of the surrogate model. Here we compare the difference between these two types of noise and visualize them. We randomly sample images from CIFAR-10 and add Gaussian noise with $\sigma=0.2$ to both the semantic layer of the surrogate model simulating zoomed blur transformation and raw images. Results are shown in Fig.~\ref{fig3}. Both types of noise severely blur the images. But we found that the augmented semantic noise is more placid, which can therefore keep the holistic semantic features better, e.g., as shapes.  
\label{others}
\section{Conclusion}
In this paper, we proposed a generalized randomized smoothing framework (GSmooth) for certifying robustness against general semantic transformations. Considering that it is nontrivial to conduct theoretical analysis for non-resolvable transformations, we proposed a novel method using a surrogate neural network to fit semantic transformations. Then we theoretically provide a tight certified robustness bound for the surrogate model, which can be used for certifying semantic transformations. Extensive experiments on publicly available datasets verify the effectiveness of our method by achieving state-of-the-art performance on various types of semantic transformations especially for those complex transformations such as zoom blur or pixelate. 

\section*{Acknowledgments}

This work was supported by the National Key Research and Development Program of China (2020AAA0106000, 2020AAA0104304, 2020AAA0106302, 2017YFA0700904), NSFC Projects (Nos. 62061136001, 61621136008, 62076147, U19B2034, U19A2081, U1811461), the major key project of PCL (No. PCL2021A12), 
Tsinghua-Alibaba Joint Research Program, Tsinghua-OPPO Joint Research Center, and Tiangong Institute for Intelligent Computing, NVIDIA NVAIL Program and the High Performance Computing Center, Tsinghua University.




\bibliographystyle{icml2022}
\bibliography{ref}
\newpage
\appendix
\onecolumn

\section{Proofs of Theorems}
\label{appendix1}
In this section, we will provide detailed proofs of theorems in our paper.

First, we restate the theorem of randomized smoothing for additive noise and
binary classifiers $f (\cdummy) : \mathbb{R}^n \rightarrow [0, 1]$,
\begin{theorem}
\label{thm-4}
  Let $f (x)$ be any classifier and $G (x)$ be the smoothed classifier defined as
  \begin{equation}
  G (x) =\mathbb{E}_{\theta \sim g } [f (x + \theta)], \label{a1}
\end{equation}
  if $G (x) < \frac{1}{2}$, then $G (x + \delta) <
  \frac{1}{2}$ holds for any
  \begin{equation}
    \| \delta \| < \int_{G (x)}^{\frac{1}{2}} \frac{1}{\Phi (p)} \mathd p,
  \end{equation}
  where $\Phi (\cdummy)$ is a function about smoothing distribution defined in
  Eq.~(\ref{eq2}).
\end{theorem}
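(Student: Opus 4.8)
The plan is to control the growth rate of $G$ along straight lines by differentiating under the integral and invoking a Neyman--Pearson-type optimization, and then to integrate the resulting differential inequality.

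First I would compute $\nabla_\delta G(x+\delta)$. Writing $G(x+\delta)=\int_{\mathbb{R}^n} f(x+\delta+\theta)\,g(\theta)\,\mathd\theta$ and substituting $\eta=\theta+\delta$ moves the $\delta$-dependence onto $g$; since $g\propto\exp(-\psi)$ we have $\nabla g=-(\nabla\psi)\,g$, and changing variables back gives
\[
 \nabla_\delta G(x+\delta)=\mathbb{E}_{\theta\sim g}\!\left[f(x+\delta+\theta)\,\nabla\psi(\theta)\right].
\]
(Differentiating under the integral sign is justified under mild regularity of $g$ and $\psi$, which I would carry as a standing assumption, as in \citet{yang2020randomized}.) Consequently, for any unit vector $u$ the directional derivative is $\langle u,\nabla_\delta G(x+\delta)\rangle=\mathbb{E}_{\theta}[\phi(\theta)\,\gamma_u]$, where $\phi(\theta)\triangleq f(x+\delta+\theta)\in[0,1]$, $\gamma_u=\langle u,\nabla\psi(\theta)\rangle$ is the random variable from Eq.~(\ref{eq2}), and $\mathbb{E}_\theta[\phi]=G(x+\delta)$.

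The key step is the Neyman--Pearson argument: among all measurable $\phi:\mathbb{R}^n\to[0,1]$ with $\mathbb{E}_\theta[\phi]=p$, the linear functional $\mathbb{E}_\theta[\phi\,\gamma_u]$ is maximized by the level-set test $\phi^\star=\mathbb{I}\{\gamma_u>\varphi_u^{-1}(p)\}$, whose value is exactly $\mathbb{E}[\gamma_u\,\mathbb{I}\{\gamma_u>\varphi_u^{-1}(p)\}]$; taking the maximum over $u$ and then choosing $u$ along the gradient direction yields the pointwise bound $\|\nabla_\delta G(x+\delta)\|_2\leqslant\Phi\big(G(x+\delta)\big)$. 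I expect this optimization lemma --- together with the measure-theoretic bookkeeping when $\gamma_u$ has atoms, so that $\varphi_u^{-1}$ is only a generalized inverse and the optimal test may need to be randomized on the boundary level set --- to be the main obstacle; for the noise distributions actually used (e.g.\ Gaussian, where $\gamma_u$ is Gaussian and hence atomless) the indicator is exactly optimal.

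Finally I would turn the bound into a radius by a one-dimensional comparison argument. Fix the direction $v=\delta/\|\delta\|_2$ and set $h(t)=G(x+tv)$ for $t\in[0,\|\delta\|_2]$, so $h(0)=G(x)<\tfrac12$ and $h'(t)\leqslant\|\nabla_\delta G(x+tv)\|_2\leqslant\Phi(h(t))$. Suppose for contradiction that $G(x+\delta)\geqslant\tfrac12$; by continuity let $t^\star\leqslant\|\delta\|_2$ be the first time $h(t^\star)=\tfrac12$, so $h\leqslant\tfrac12$ on $[0,t^\star]$, where $\Phi(h(t))>0$. Dividing the inequality by $\Phi(h(t))$ and integrating, with the substitution $p=h(t)$,
\[
 \int_{G(x)}^{1/2}\frac{\mathd p}{\Phi(p)}=\int_0^{t^\star}\frac{h'(t)}{\Phi(h(t))}\,\mathd t\leqslant t^\star\leqslant\|\delta\|_2,
\]
contradicting the hypothesis $\|\delta\|_2<\int_{G(x)}^{1/2}\mathd p/\Phi(p)$. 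Hence $G(x+\delta)<\tfrac12$. (Here I use that $\Phi$ is positive and continuous on $[G(x),\tfrac12]$ so the integral is well-defined; if $\Phi$ vanishes on that interval the right-hand side is $+\infty$, the claim is the strongest possible, and the same monotonicity argument still applies.)
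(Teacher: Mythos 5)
Your proof is correct and follows essentially the same route as the paper's: transfer the derivative from $f$ onto $g$ (you do this by the substitution $\eta=\theta+\delta$, the paper by explicit componentwise integration by parts --- the two are equivalent), bound the directional derivative via the Neyman--Pearson optimization to obtain $\|\nabla G\|\leqslant\Phi(G)$, and then integrate the differential inequality along a line segment. Your final ODE step is actually cleaner than the paper's (which waves ``WLOG assume $G(\xi_t)$ is increasing''), since the first-hitting-time contradiction argument handles non-monotone $h$ without loss, and your remark about the generalized inverse $\varphi_u^{-1}$ when $\gamma_u$ has atoms is a genuine measure-theoretic subtlety the paper glosses over.
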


\begin{proof}
  We first calculate the gradient of the smoothed classifier as
  \begin{eqnarray*}
    \nabla G (x) & = & \frac{\partial}{\partial x} \int_{\mathbb{R}^n} f (x +
    \theta) g (\theta) \mathd \theta\\
    & = & \int_{\mathbb{R}^n} \frac{\partial}{\partial x} f (x + \theta) g(\theta) \mathd \theta\\
    & = & \int_{\mathbb{R}^n} \left(\frac{\partial}{\partial \theta} f (x + \theta)\right) g(\theta) \mathd \theta.
  \end{eqnarray*}
  Then we multiple any vector with unit norm $u \in \mathcal{B}_n (1) =
  \left\{ u : \| u \| {= 1, u \in \mathbb{R}^n}  \right\}$,
  \begin{eqnarray}
    | \langle \nabla G (x), u \rangle | & = & \left| \left\langle
    \int_{\mathbb{R}^n} \left(\frac{\partial}{\partial \theta} f (x + \theta)\right) g(\theta) \mathd \theta, u \right\rangle \right| \nonumber\\
    & = & \left| \int_{\mathbb{R}^n} \left\langle \frac{\partial}{\partial
    \theta} f (x + \theta), u \right\rangle g (\theta) \mathd \theta \right|
    \nonumber\\
    & = & \left| \sum_i \int_{\mathbb{R}^n} \frac{\partial f (x +
    \theta)}{\partial \theta_i} u_i g (\theta) \mathd \theta \right|
    \nonumber\\
    & = & \left| \sum_i \int_{\mathbb{R}^{n - 1}} \left( \int_{- \infty}^{+
    \infty} \frac{\partial f (x + \theta)}{\partial \theta_i} u_i g (\theta)
    \mathd \theta_i \right) \prod_{j \neq i} \mathd \theta_j \right|
    \nonumber\\
    & \overset{1}{=} & \left| - \sum_i \int_{\mathbb{R}^{n - 1}} \left( \int_{- \infty}^{+
    \infty} f (x + \theta) u_i \frac{\partial g (\theta)}{\partial \theta_i}
    \mathd \theta_i \right) \prod_{j \neq i} \mathd \theta_j \right|
    \nonumber\\
    & = & \left| \sum_i \int_{\mathbb{R}^n} f (x + \theta) \frac{\partial g
    (\theta)}{\partial \theta_i} u_i \mathd \theta \right| \nonumber\\
    & = & \left| \int_{\mathbb{R}^n} f (x + \theta) \langle \nabla g
    (\theta), u \rangle \mathd \theta \right| \nonumber\\
    & = & \left| \int_{\mathbb{R}^n} f (x + \theta) g (\theta) \langle \nabla
    \psi (\theta), u \rangle \mathd \theta \right| \nonumber\\
    & = & | \mathbb{E}_{\theta \sim g } [f (x + \theta) \langle
    \nabla \psi (\theta), u \rangle] |, 
  \end{eqnarray}
  here equation 1 holds because of integration by parts. To bound the gradient of the smoothed classifier, we use the following
  inequality,
  \begin{equation}
    | \langle \nabla G (x), u \rangle | \leqslant \sup_{\widehat{f :} \hat{G}
    (x) = G (x)} \mathbb{E}_{\theta \sim g } \left[\hat{f} (x + \theta)
    \langle  \nabla\psi (\theta), u \rangle\right].
  \end{equation}
  As shown in \citet{yang2020randomized}, the optimal $\hat{f} (x)$ achieves at,
  \begin{equation}
    \hat{f} (x + \theta) = \left\{\begin{array}{l}
      1,\quad \tmop{if} \langle u, \nabla\psi (\theta) \rangle > \varphi^{- 1}_u (G (x))
      \\
      0,\quad \tmop{else}.
    \end{array}\right.
  \end{equation}
  Then, we have
  \begin{eqnarray}
    \mathbb{E}_{\theta \sim g } \left[\hat{f} (x + \theta) \langle u, \nabla\psi
    (\theta) \rangle\right] & = & \mathbb{E} \left[\gamma_u \mathbb{I} \{ \gamma_u >
    \varphi^{- 1}_u (G (x)) \}\right] \nonumber\\
    & \leqslant & \Phi (G (x)),
  \end{eqnarray}
  which means that,
  \begin{equation}
    | \langle \nabla G (x), u \rangle | \leqslant \Phi (G (x)).
  \end{equation}
  Since it holds for all $u \in \mathcal{B}_n (1)$, i.e., $\|u\|\leq = 1$, we have,
  \begin{equation}
    \max_{u \in \mathcal{B}_n (1)} \langle \nabla G (x), u \rangle \leqslant
    \Phi (G (x)).
  \end{equation}
  Consider a path from $\xi_t : [0, \| \delta \|] \rightarrow \mathbb{R}^d$
  with $\xi_0 = x$, $\xi_{\| \delta \|} = x + \delta$ and $\xi_t' =
  \frac{\delta}{\| \delta \|}$, we have
  \begin{equation}
    \frac{\mathd G (\xi_t)}{\mathd t} = \langle \nabla G (\xi_t), u \rangle
    \leqslant \Phi (G (\xi_t)) .
  \end{equation}
  If the norm of $\delta$ satisfies that,
  \begin{equation}
  \label{app-27}
    \| \delta \| < \int_{G (x)}^{\frac{1}{2}} \frac{1}{\Phi (p)} \mathd p.
  \end{equation}
  If the right hand side of eq (\ref{app-27}) exists, we name it $\| \delta_0 \| \triangleq \int_{G
  (x)}^{\frac{1}{2}} 1 / \Phi (p) \mathd p$. WLOG, we
  assume that $G (\xi_t)$ is increasing in $t$, then we could calculate the
  minimal $t$ when $G (\xi_t)$ increase to $\frac{1}{2}$,
  \begin{equation}
    T = \int_{G (\xi_0)}^{\frac{1}{2}} \frac{1}{\Phi (p)} \mathd p = \|
    \delta_0 \| .
  \end{equation}
  By the generality of $\delta$ we have $G (x + \delta) < \frac{1}{2}$ holds for any $\delta < \| \delta_0 \|$.
\end{proof}

This theorem can be naturally extended to problems with $p > 2$ classes by considering the two top classes which are $G(x)_A$ and $G(x)_B$. This turns the problem into a binary classification problem.

\subsection{The Proof of Theorem \ref{thm-1}}
\label{app_thm_1}
In this part, we will prove Theorem~\ref{thm-1}.
\begin{reptheorem}{thm-1}
For any classifier $f (x)$, let $G (x)$ be the smoothed classifier defined in Eq.~(\ref{1}). If there exists a function $M(\cdot,\cdot):P\times P\rightarrow \mathbb{R}$ satisfying
  \[ \frac{\partial \gamma (\theta, \xi)}{\partial \xi} = \frac{\partial
     \gamma (\theta, \xi)}{\partial \theta} M (\theta, \xi), \]
  and there exist two constants $\underline{p_A}$, $\overline{p_B}$ satisfying
  \[ G (x)_A \geqslant \underline{p_A} \geqslant \overline{p_B} \geqslant G
     (x)_B, \]
  then $y_A = \margmax_{i \in \mathcal{Y}} G (\tau (\xi, x))_i$ holds for any
  $\| \xi \| \leqslant R$, where
  \begin{equation}
    R = \frac{1}{ 2M^*} \int_{\overline{p_B}}^{\underline{p_A}} \frac{1}{\Phi(p)}\mathd p ,
  \end{equation}
and $M^* = \max_{\xi,\theta \in P}\|M(\xi,\theta)\|$.
\end{reptheorem}

\begin{proof}
WLOG, we only prove it for binary cases that $f(\cdot):\mathbb{R}^n\to[0,1]$. Recall that $G (x) =\mathbb{E}_{\theta \sim g} [f (\tau (\theta, x))]$, we have
\begin{equation}
\begin{split}
  \nabla_{\xi} G (\tau(\gamma (\theta, \xi), x))
  = & \int \frac{\partial f (\tau(\gamma (\theta, \xi), x))}{\partial \tau(\gamma (\theta, \xi), x)} 
  \frac{\partial \tau(\gamma (\theta, \xi), x)}{\partial \xi} g (\theta) \mathd
  \theta \label{34} \\
  = & \int \frac{\partial f (\tau(\gamma (\theta, \xi), x))}{\partial \tau(\gamma (\theta, \xi), x)}
  \frac{\partial \tau(\gamma (\theta, \xi), x)}{\partial \gamma (\theta, \xi)}  \frac{\partial \gamma (\theta, \xi)}{\partial \xi} g (\theta) \mathd
  \theta \\
  = & \int \frac{\partial f (\tau(\gamma (\theta, \xi), x))}{\partial \tau(\gamma (\theta, \xi), x)} 
  \frac{\partial \tau(\gamma (\theta, \xi), x)}{\partial \gamma (\theta, \xi)}  \frac{\partial \gamma (\theta, \xi)}{\partial \theta} M (\theta, \xi) g (\theta) \mathd
  \theta \\
  = & \int \frac{\partial f (\tau(\gamma (\theta, \xi), x))}{\partial \theta} M (\theta, \xi) g (\theta) \mathd \theta. 
\end{split}
\end{equation}
For any $u\in R^d$ satisfying $\|u\| = 1$, we have
\begin{eqnarray}
    \left|\langle \nabla_{\xi} G (\gamma (\xi, x)), u \rangle\right| & = & \left|\int \frac{\partial f (\tau(\gamma (\theta, \xi), x))}{\partial \theta} M (\theta, \xi) u g (\theta) \mathd \theta \right|\\
    & \leqslant & M^*\max_{\|v\|=1}\left|\int \frac{\partial f (\tau(\gamma (\theta, \xi), x))}{\partial \theta} v g (\theta) \mathd \theta \right|\\
    & \overset{1}{=} & M^*\max_{\|v\|=1}\left|\int  f (\tau(\gamma (\theta, \xi),  x)) \frac{\partial g (\theta)}{\partial \theta}  v   \mathd \theta \right|,
\end{eqnarray}
here $M^* = \max_{\theta, \xi} \|M (\theta, \xi)\|$ and equality 1 holds because of partial integral. Moreover, we assume that $g(\theta) = \exp(-\psi(\theta))$ and can further prove that
\begin{eqnarray}
    \left|\langle \nabla_{\xi} G (\gamma (\xi, x)), u \rangle\right| & \leqslant & M^*\max_{\|v\|=1}\left|\int  f (\tau(\gamma (\theta, \xi), x)) \frac{\partial g (\theta)}{\partial \theta}  v   \mathd \theta \right|\\
    & = & M^*\max_{\|v\|=1}\left|\int  f (\tau(\gamma (\theta, \xi), x)) g(\theta) \nabla \psi(\theta)  v   \mathd \theta \right|\\
    & = & M^*\max_{\|v\|=1}\left|\mathbb{E}_{\theta\sim g}\left[  f (\tau(\gamma (\theta, \xi), x))  \langle\nabla \psi(\theta),  v\rangle \right]   \right| \\
    & \leqslant & M^\ast \max_{\|v\|=1} \sup_{\widehat{f :} \hat{G}
    (\tau(\xi, x)) = G(\tau(\xi, x))} \mathbb{E}_{\theta \sim g} [\hat{f} (\tau(\gamma(\theta,\xi),x))
    \langle  \psi (\theta), u \rangle].
\end{eqnarray}
Similar with the proof of Theorem \ref{thm-4}, the optimal $\hat{f}$ achieves at
\begin{equation}
    \hat{f} (\tau(\gamma(\theta, \xi),x)) = \left\{\begin{array}{l}
      1,\quad \tmop{if} \langle \psi (\theta), u \rangle > \varphi^{- 1}_u (G (\tau(\xi,x)))),
      \\
      0,\quad \tmop{else}.
    \end{array}\right.
  \end{equation}
Then we have 
\begin{equation}
    \left|\langle \nabla_{\xi} G (\tau(\xi, x)), u \rangle\right| \leqslant \Phi(G(\tau(\xi,x))).
\end{equation}
Consider a path from $\zeta_t : [0, \| \delta \|] \rightarrow \mathbb{R}^d$ with $\zeta_0 = x$ , $\zeta_{\| \delta \|} = \tau(\xi,x)$ and $\zeta_t' =
\frac{\delta}{\| \delta \|}$, we have
\begin{equation}
    \frac{\mathd G (\xi_t)}{\mathd t} = \langle \nabla G (\xi_t), u \rangle
    \leqslant \Phi (G (\xi_t)) .
\end{equation}
The last part of proof is the same with Theorem \ref{thm-4}.
\end{proof}

\subsection{The proof of Theorem \ref{thm-2}}
\label{app_thm_2}
In this part, we will prove Theorem \ref{thm-2}, which is the main result in this paper.
\begin{reptheorem}{thm-2}
  Suppose $f (x)$ is any classifier and $\tilde{G} (\tilde{x})$ is the smoothed
  classifier defined in Eq.~(\ref{2}), if there exists $\underline{p_A}$ ,
  $\overline{p_B}$ that satisfies that
  \[ \tilde{G} (\tilde{x})_A \geqslant \underline{p_A} \geqslant
     \overline{p_B} \geqslant \tilde{G} (\tilde{x})_B, \]
  then $y_A = \margmax_{i \in \mathcal{Y}} \tilde{G} (\tilde{\tau}
  (\tilde{\xi}, \tilde{x}))_i$ for any $\| \xi \|_2 \leqslant R$ where
  \begin{equation}
     R = \frac{1}{ 2M^*} \int_{\overline{p_B}}^{\underline{p_A}} \frac{1}{\Phi(p)}\mathd p ,
  \end{equation}
  and the coefficient $M^*$ is defined as
  \begin{equation}
    M^{\ast} = \max_{\xi, \theta \in P} \sqrt{1 +
     \left\| \frac{\partial F_{2} (y_{\xi})}{\partial
    \xi} - \frac{\partial F_1 (\theta)}{\partial \theta} \right\|_2^2}.
    \label{rep-3}
  \end{equation}
\end{reptheorem}
\begin{proof}
WLOG, we prove it for binary cases where $f(\cdot):\mathbb{R}^n\to [0,1]$. First, we will calculate the gradient of $\tilde{G} (\tilde{\tau} (\tilde{\xi}, \tilde{x}))$ to $\tilde{\xi}$ as
\begin{equation}
  \nabla_{\tilde{\xi}} \tilde{G} (\tilde{\tau} (\tilde{\xi}, \tilde{x})) =
  \nabla_{\tilde{\xi}} \mathbb{E}_{\tilde{\theta} \sim \tilde{g}
  (\tilde{\theta})} [\tilde{f} (\tilde{\tau} (\tilde{\theta}, \tilde{\tau}
  (\tilde{\xi}, \tilde{x})))] .
\end{equation}
We expand the expectation into integral and use chain rule to see that
\begin{equation}
  \nabla_{\tilde{\xi}} \tilde{G} (\tilde{\tau} (\tilde{\xi}, \tilde{x})) =
  \int_{\mathbb{R}^{n + d}} \frac{\partial \tilde{f} (\tilde{\tau}
  (\tilde{\theta}, \tilde{y}_{\xi}))}{\partial \tilde{\tau} (\tilde{\theta},
  \tilde{y}_{\xi})}  \frac{\partial \tilde{\tau} (\tilde{\theta},
  \tilde{y}_{\xi})}{\partial \tilde{y}_{\xi}}  \frac{\partial
  \tilde{y}_{\xi}}{\partial \tilde{\xi}} \tilde{g} (\tilde{\theta}) \mathd
  \tilde{\theta} .
\end{equation}
Similar to previous proof, the key step is to eliminate the gradient of $\frac{\partial \tilde{f}
(\tilde{\tau} (\tilde{\theta}, \tilde{y}_{\xi}))}{\partial \tilde{\tau}
(\tilde{\theta}, \tilde{y}_{\xi})}$ and replace it with $\frac{\partial
\tilde{f} (\tilde{\tau} (\tilde{\theta}, \tilde{y}_{\xi}))}{\partial
\tilde{\theta}}$. Since
\begin{equation}
    \frac{\partial \tilde{\tau} (\tilde{\theta},
  \tilde{y}_{\xi})}{\partial \tilde{y}_{\xi}} = \begin{bmatrix}
    \sigma'_1 (z'_{\theta}) & \\
    & \sigma_2' (z_{\theta})
  \end{bmatrix} \begin{bmatrix}
    F'_{21} (y'_{\xi}) & \\
    & F_{22}' (y_{\xi})
  \end{bmatrix},
\end{equation}

\begin{equation}
    \frac{\partial
  \tilde{y}_{\xi}}{\partial \tilde{\xi}} = \begin{bmatrix}
    H'_1 (z_{\xi}') & \\
    & H_2' (z_{\xi})
  \end{bmatrix} \begin{bmatrix}
    I_d & \\
    F_1' (\xi) & I_n
  \end{bmatrix}.
\end{equation}
We have
\begin{eqnarray}
  \nabla_{\tilde{\xi}} \tilde{G} (\tilde{\tau} (\tilde{\xi}, \tilde{x})) & = & \int_{\mathbb{R}^{n +
  d}} \frac{\partial \tilde{F} (\tilde{\tau} (\tilde{\theta},
  \tilde{y}_{\xi}))}{\partial \tilde{\tau} (\tilde{\theta}, \tilde{y}_{\xi})}
  \begin{bmatrix}
    H'_1 (z'_{\theta}) & \\
    & H_2' (z_{\theta})
  \end{bmatrix} \begin{bmatrix}
    F'_{21} (y'_{\xi}) & \\
    & F_{22}' (y_{\xi})
  \end{bmatrix} \nonumber\\
  &  & \begin{bmatrix}
    H'_1 (z_{\xi}') & \\
    & H_2' (z_{\xi})
  \end{bmatrix} \begin{bmatrix}
    I_d & \\
    F_1' (\xi) & I_n
  \end{bmatrix} \tilde{g} (\tilde{\theta}) \mathd \tilde{\theta} \\
  & = & \int_{\mathbb{R}^{n + d}} \frac{\partial \tilde{F}
  (\tilde{\tau} (\tilde{\theta}, \tilde{y}_{\xi}))}{\partial \tilde{\tau}
  (\tilde{\theta}, \tilde{y}_{\xi})}  \begin{bmatrix}
    H'_1 (z'_{\theta}) & \\
    & H_2' (z_{\theta})
  \end{bmatrix} \begin{bmatrix}
    I_d & \\
    F_1' (\theta) & I_n
  \end{bmatrix} \begin{bmatrix}
    I_d & \\
    - F_1' (\theta) & I_n
  \end{bmatrix} \nonumber\\
  &  & \begin{bmatrix}
    F'_{21} (y'_{\xi}) & \\
    & F_{22}' (y_{\xi})
  \end{bmatrix} \begin{bmatrix}
    H'_1 (z_{\xi}') & \\
    & H_2' (z_{\xi})
  \end{bmatrix} \begin{bmatrix}
    I_d & \\
    F_1' (\xi) & I_n
  \end{bmatrix} \tilde{g} (\tilde{\theta}) \mathd \tilde{\theta} \\
  & = & \int_{\mathbb{R}^{n + d}} \frac{\partial \tilde{F} (\tilde{\tau}
  (\tilde{\theta}, \tilde{y}_{\xi}))}{\partial \tilde{\theta}}
  \begin{bmatrix}
    I_d & \\
    - F_1' (\theta) & I_n
  \end{bmatrix} \begin{bmatrix}
    F'_{21} (y'_{\xi}) & \\
    & F_{22}' (y_{\xi})
  \end{bmatrix} \nonumber\\
  &  & \begin{bmatrix}
    H'_1 (z_{\xi}') & \\
    & H_2' (z_{\xi})
  \end{bmatrix} \begin{bmatrix}
    I_d & \\
    F_1' (\xi) & I_n
  \end{bmatrix} \tilde{g} (\tilde{\theta}) \mathd \tilde{\theta} \\
  & = & \int_{\mathbb{R}^{n + d}} \frac{\partial \tilde{F} (\tilde{\tau}
  (\tilde{\theta}, \tilde{y}_{\xi}))}{\partial \tilde{\theta}}
  \tilde{M} (\tilde{\xi},  \tilde{\theta}) g (\tilde{\theta}) \mathd \tilde{\theta}, 
\end{eqnarray}
here we define
\begin{equation}
  \tilde{M} (\tilde{\xi},  \tilde{\theta}) \triangleq \begin{bmatrix}
    I_d & \\
    - F_1' (\theta) & I_n
  \end{bmatrix} \begin{bmatrix}
    F'_{21} (y'_{\xi}) & \\
    & F_{22}' (y_{\xi})
  \end{bmatrix} \begin{bmatrix}
   H'_1 (z_{\xi}') & \\
    & H_2' (z_{\xi})
  \end{bmatrix} \begin{bmatrix}
    I_d & \\
    F_1' (\xi) & I_n
  \end{bmatrix}.
\end{equation}
We consider the unit enlargement, which means that
\begin{equation}
  H_1 (z') = z', F_{21} (x') = x',
\end{equation}
thus we have
\begin{equation}
  \tilde{M} (\tilde{\xi},  \tilde{\theta}) = \begin{bmatrix}
    I_d & O_{d \times n}\\
    F_{21}' (y_{\xi}) - F_1' (\theta) & F_{22}' (y_{\xi}) H_2' (z_{\xi})
  \end{bmatrix}.
\end{equation}
Since $\theta'$ is the virtual parameter introduced, which can be taken as 0 in case of actual disturbance. Thus we only need to consider the projection of $\nabla_{\tilde{\xi}} \tilde{G}
(\tilde{y}_{\xi})$ in the space of $\xi$, i.e., we set
\begin{equation}
  \tilde{u} = \left(\begin{array}{c}
    u\\
    O_{n \times 1}
  \end{array}\right),
\end{equation}
here $u \in \mathbb{R}^d$ satisfying $\| u \| = 1$. Assume
\begin{equation}
\begin{split}
  \tilde{g} (\tilde{\theta})& = \exp (- \tilde{\psi} (\tilde{\theta})), \\
  \frac{\partial \tilde{g} (\tilde{\theta})}{\partial \tilde{\theta}}& = -
  \tilde{g} (\tilde{\theta})  \nabla \tilde{\psi} (\tilde{\theta}).
\end{split}
\end{equation}
And we have
\begin{eqnarray}
 \left|\langle \nabla_{\tilde{\xi}} \tilde{G} (\tilde{y}_{\xi}), \tilde{u} \rangle \right|
  &=&  \left|  \int_{\mathbb{R}^{n + d}} \frac{\partial \tilde{f} (\tilde{\tau}
  (\tilde{\theta}, \tilde{y}_{\xi}))}{\partial \tilde{\theta}}
  \tilde{M} (\tilde{\xi},  \tilde{\theta})\tilde{u} \tilde{g} (\tilde{\theta}) \mathd \tilde{\theta} \right|\\
  &=& \left|\int_{\mathbb{R}^{n + d}} \frac{\partial \tilde{f} (\tilde{\tau}
  (\tilde{\theta}, \tilde{y}_{\xi}))}{\partial \tilde{\theta}} \begin{bmatrix}
    I_d,& O_{d\times n}\\
    F_{22}' (y_{\xi}) - F_1' (\theta),& O_{n\times n}
  \end{bmatrix}
   \tilde{u} \tilde{g} (\tilde{\theta}) \mathd \tilde{\theta}\right|\\
   &=& \left|\int_{\mathbb{R}^{n + d}} \frac{\partial \tilde{f} (\tilde{\tau}
  (\tilde{\theta}, \tilde{y}_{\xi}))}{\partial \tilde{\theta}} M(\xi,\theta)
   \tilde{u} \tilde{g} (\tilde{\theta}) \mathd \tilde{\theta}\right|\\
    &\leqslant& M^* \max_{\|\tilde{v}\|_2=1} \left|\int_{\mathbb{R}^{n + d}} \frac{\partial \tilde{f} (\tilde{\tau}
  (\tilde{\theta}, \tilde{y}_{\xi}))}{\partial \tilde{\theta}}
   \tilde{v} \tilde{g} (\tilde{\theta}) \mathd \tilde{\theta}\right|\\
  &\overset{1}{=}& M^* \max_{\|\tilde{v}\|_2=1} \left|\int_{\mathbb{R}^{n + d}} \tilde{f} (\tilde{\tau}
  (\tilde{\theta}, \tilde{y}_{\xi})) \frac{\partial \tilde{g} (\tilde{\theta})}{\partial \tilde{\theta}}
   \tilde{v}  \mathd \tilde{\theta}\right|\\
   &=& M^* \max_{\|\tilde{v}\|_2=1} \left|\int_{\mathbb{R}^{n + d}} \tilde{f} (\tilde{\tau}
  (\tilde{\theta}, \tilde{y}_{\xi})) \tilde{g} (\tilde{\theta}) \nabla \tilde{\psi} (\tilde{\theta})
   \tilde{v}  \mathd \tilde{\theta}\right|\\
   &=& M^*\max_{\|\tilde{v}\|_2=1}\left|\mathbb{E}_{\theta\sim g}\left[ \tilde{f}(\tilde{\tau}(\tilde{\theta},\tilde{\tau}(\tilde{\xi}, x)))  \langle\nabla \tilde{\psi}(\tilde{\theta}),  \tilde{v}\rangle \right]   \right|, 
\end{eqnarray}
here equality 1 holds because of partial integral. Moreover, we can bound the right hand side by 
\begin{equation}
    \left|\langle \nabla_{\tilde{\xi}} \tilde{G} (\tilde{y}_{\xi}), \tilde{u} \rangle \right| 
      \leqslant \sup_{\widehat{f :} \hat{G}
    (\tau(\xi, x)) = G(\tau(\xi, x))} M^*\max_{\|\tilde{v}\|_2=1} \  \left|\mathbb{E}_{\theta\sim g}\left[ \hat{f}(\tilde{\tau}(\tilde{\theta},\tilde{\tau}(\tilde{\xi}, x)))  \langle\nabla \tilde{\psi}(\tilde{\theta}),  \tilde{v}\rangle \right]   \right|,
\end{equation}
and the optimal $\hat{f}$ is
\begin{equation}
    \hat{f} (\tilde{\tau}(\tilde{\theta},\tilde{\tau}(\tilde{\xi},\tilde{x})) = \left\{\begin{array}{l}
      1, \tmop{if} \langle \tilde{\psi} (\tilde{\theta}), \tilde{u} \rangle > \varphi^{- 1}_u (\tilde{G} (\tilde{\tau}(\tilde{\xi},\tilde{x}))))
      \\
      0, \tmop{else}
    \end{array},\right.
\end{equation}
here
\begin{eqnarray}
    M(\xi,\theta) = \begin{bmatrix}
    I_d,& O_{d\times n}\\
    F_{22}' (y_{\xi}) - F_1' (\theta),& O_{n\times n},
  \end{bmatrix},
\end{eqnarray}
and $M^*$ is
\begin{equation}
\begin{split}
  M^* &= \max_{\xi, \theta \in P} \left\| \begin{bmatrix}
    I_d,& O_{d\times n}\\
    F_{22}' (y_{\xi}) - F_1' (\theta),& O_{n\times n}
  \end{bmatrix}\right\|_2\\ 
  &= \max_{\xi, \theta \in P} \left\| \begin{bmatrix}
    I_d\\
    \frac{\partial F_{22} (y_{\xi})}{\partial \xi} - \frac{\partial F_1
    (\theta)}{\partial \theta}
  \end{bmatrix}\right\|_2\\
  &= \max_{\xi, \theta \in P} \sqrt{1 +
   \left\| \left( \frac{\partial F_{22}
  (y_{\xi})}{\partial \xi} - \frac{\partial F_1 (\theta)}{\partial \theta}
  \right) \right\|_2^2}.
\end{split}
\end{equation}
Notice that here $F_{22}(\cdot)$ is the same as $F_2(\cdot)$ in the main text.
Then we could apply the techniques used in Theorem \ref{thm-1} and Theorem \ref{thm-4} here and have:
\begin{equation}
   R = \frac{1}{ 2M^*} \int_{\overline{p_B}}^{\underline{p_A}} \frac{1}{\Phi(p)}\mathd p.
\end{equation}
Thus we have proven this theorem.
\end{proof}
\subsection{The Proof of Theorem \ref{thm-3}}
\label{app_thm_3}
In this part, we will prove Theorem~\ref{thm-3}.
\begin{reptheorem}{thm-3}
   Suppose the simulation of the semantic transformation has a small enough error
  \[ \| \tilde{\tau} (\tilde{\xi}, \tilde{x}) - \overline{\tau} (\tilde{\xi}, \tilde{x})
     \|_2 < \varepsilon, \]
    where $\overline{\tau} (\tilde{\xi}, \tilde{x})$ is the real semantic transformation.
  Then there exists a constant ratio 
  \[A=A(\|F_1'(\tilde{\xi})\|_2, \|F_2'(\tilde{y}_\xi)\|_2,\|F_2'(\tilde{z}_\xi)\|_2)>0,\]
  which does not depend on the target classifier. The certified radius for the real semantic transformation satisfies: 
  \begin{equation}
      R_r > R (1 - A \varepsilon),
  \end{equation} 
  where $R$ is the certified radius in Eq.~\eqref{eq:the2-r} for the surrogate neural network in Theorem \ref{thm-2}.
  
\end{reptheorem}

\begin{proof}
We set
\begin{equation}
  \tilde{u} = \left(\begin{array}{c}
    u\\
    O_{n \times 1}
  \end{array}\right),
\end{equation}
here $u \in \mathbb{R}^d$ satisfying $\| u \| = 1$. Then we have
\begin{equation}
\begin{split}
    \left\langle \nabla_{\tilde{\xi}}\tilde{G}(\tilde{\tau}(\tilde{\xi},\tilde{x} )) - \nabla_{\tilde{\xi}}\tilde{G}(\bar{\tau}(\tilde{\xi},\tilde{x} )), \tilde{u}\right\rangle &= \int \left( \frac{\partial\tilde{f}(\tilde{\tau}(\tilde{\theta}, \tilde{\tau}(\tilde{\xi}, \tilde{x})))}{\partial \tilde{\xi}} - \frac{\partial\tilde{f}(\tilde{\tau}(\tilde{\theta}, \bar{\tau}(\tilde{\xi}, \tilde{x})))}{\partial \tilde{\xi}}\right) \tilde{u} \tilde{g}(\tilde{\theta})\mathd \tilde{\theta}\\
    &= \int \left(  \tilde{\tau}(\tilde{\xi}, \tilde{x}) - \bar{\tau}(\tilde{\xi}, \tilde{x})\right) \frac{\partial^2 \tilde{f}(\tilde{\tau}(\tilde{\theta}, \hat{\tau}(\tilde{\xi}, \tilde{x})))}{\partial \tilde{\xi}\partial \hat{\tau}} \tilde{u} \tilde{g}(\tilde{\theta})\mathd \tilde{\theta}.
\end{split}
\end{equation}
Set $L = \frac{\partial^2 \tilde{f}(\tilde{\tau}(\tilde{\theta}, \bar{\tau}(\tilde{\xi}, \tilde{x})))}{\partial \tilde{\xi}\partial \hat{\tau}}$, and we have
\begin{equation}
\begin{split}
    L &= \frac{\partial }{\partial \tilde{\xi}}\left(\frac{\partial \tilde{f}(\tilde{\tau}(\tilde{\theta}, \hat{\tau}(\tilde{\xi}, \tilde{x})))}{\partial \hat{\tau}}\right) = \frac{\partial }{\partial \tilde{\xi}}\left(\frac{\partial \tilde{f}(\tilde{\tau}(\tilde{\theta}, \hat{\tau}(\tilde{\xi}, \tilde{x})))}{\partial \tilde{\tau}(\tilde{\theta}, \hat{\tau}(\tilde{\xi}, \tilde{x}))} \frac{\partial \tilde{\tau}(\tilde{\theta}, \hat{\tau}(\tilde{\xi}, \tilde{x}))}{\partial \hat{\tau}}\right).
\end{split}
\end{equation}
Set $y_{\tilde{\xi}} = \tilde{\tau}(\tilde{\theta}, \hat{\tau}(\tilde{\xi}, \tilde{x}))$, and we have:
\begin{equation}
\begin{split}
    \frac{\partial \tilde{\tau}(\tilde{\theta}, \hat{\tau}(\tilde{\xi}, \tilde{x}))}{\partial \hat{\tau}} &= \frac{\partial \tilde{H} (\tilde{F_1} (\tilde{\theta}) + \tilde{F_2} (\hat{\tau}(\tilde{\xi}, \tilde{x})))}{\partial \hat{\tau}}\\
    &= \tilde{H}' \left(\tilde{F_1} (\tilde{\theta}) + \tilde{F_2} (\hat{\tau}(\tilde{\xi}, \tilde{x}))\right) \frac{\partial \tilde{F_2} (\hat{\tau}(\tilde{\xi}, \tilde{x}))}{\partial \hat{\tau}}\\
    &= \tilde{H}' \left(\tilde{F_1} (\tilde{\theta}) + \tilde{F_2} (\hat{\tau}(\tilde{\xi}, \tilde{x}))\right) \begin{bmatrix}
    I_d & \\
    F_1' (\theta) & I_n
  \end{bmatrix} \begin{bmatrix}
    I_d & \\
    - F_1' (\theta) & I_n
  \end{bmatrix} \frac{\partial \tilde{F_2} (\hat{\tau}(\tilde{\xi}, \tilde{x}))}{\partial \hat{\tau}}\\
  &= \frac{\partial \tilde{\tau}(\tilde{\theta}, \hat{\tau}(\tilde{\xi}, \tilde{x}))}{\partial \tilde{\theta}} \begin{bmatrix}
    I_d & \\
    - F_1' (\theta) & I_n
  \end{bmatrix} \tilde{F_2}' (\hat{\tau}(\tilde{\xi}, \tilde{x}))\\
  &= \frac{\partial \tilde{\tau}(\tilde{\theta}, \hat{\tau}(\tilde{\xi}, \tilde{x}))}{\partial \tilde{\theta}} A_1,
\end{split}
\end{equation}

here $A_1 = \begin{bmatrix}
    I_d & \\
    - F_1' (\theta) & I_n
  \end{bmatrix} \tilde{F_2}' (\hat{\tau}(\tilde{\xi}, \tilde{x}))$. By the proof above, we have $\frac{\partial }{\partial \tilde{\xi}} = \frac{\partial}{\partial \tilde{\theta}} A_2$, thus we have
\begin{equation}
\begin{split}
    L &= \frac{\partial }{\partial \tilde{\xi}}\left(\frac{\partial \tilde{f}(\tilde{\tau}(\tilde{\theta}, \hat{\tau}(\tilde{\xi}, \tilde{x})))}{\partial \tilde{\tau}(\tilde{\theta}, \hat{\tau}(\tilde{\xi}, \tilde{x}))} \frac{\partial \tilde{\tau}(\tilde{\theta}, \hat{\tau}(\tilde{\xi}, \tilde{x}))}{\partial \hat{\tau}}\right)\\
    &= \frac{\partial }{\partial \tilde{\theta}}\left(\frac{\partial \tilde{f}(\tilde{\tau}(\tilde{\theta}, \hat{\tau}(\tilde{\xi}, \tilde{x})))}{\partial \tilde{\tau}(\tilde{\theta}, \hat{\tau}(\tilde{\xi}, \tilde{x}))} \frac{\partial \tilde{\tau}(\tilde{\theta}, \hat{\tau}(\tilde{\xi}, \tilde{x}))}{\partial \tilde{\theta}} A_1\right) A_2\\
    &= \frac{\partial }{\partial \tilde{\theta}}\left(\frac{\partial \tilde{f}(\tilde{\tau}(\tilde{\theta}, \hat{\tau}(\tilde{\xi}, \tilde{x})))}{\partial \tilde{\theta}} A_1\right) A_2.
\end{split}
\end{equation}
Furthermore, we can prove that
\begin{equation}
\begin{split}
    &\left\langle \nabla_{\tilde{\xi}}\tilde{G}(\tilde{\tau}(\tilde{\xi},\tilde{x} )) - \nabla_{\tilde{\xi}}\tilde{G}(\bar{\tau}(\tilde{\xi},\tilde{x} )), \tilde{u}\right\rangle\\
    =& \int \left(  \tilde{\tau}(\tilde{\xi}, \tilde{x}) - \bar{\tau}(\tilde{\xi}, \tilde{x})\right) \frac{\partial^2 \tilde{f}(\tilde{\tau}(\tilde{\theta}, \hat{\tau}(\tilde{\xi}, \tilde{x})))}{\partial \tilde{\xi}\partial \hat{\tau}} \tilde{u} \tilde{g}(\tilde{\theta})\mathd \tilde{\theta}\\
    =& \int \left(  \tilde{\tau}(\tilde{\xi}, \tilde{x}) - \bar{\tau}(\tilde{\xi}, \tilde{x})\right) \frac{\partial }{\partial \tilde{\theta}}\left(\frac{\partial \tilde{f}(\tilde{\tau}(\tilde{\theta}, \hat{\tau}(\tilde{\xi}, \tilde{x})))}{\partial \tilde{\theta}} A_1\right) A_2 \tilde{u} \tilde{g}(\tilde{\theta})\mathd \tilde{\theta} \\
    \leqslant& \tilde{A} \epsilon \left| \int \tilde{f}(\tilde{\tau}(\tilde{\theta}, \hat{\tau}(\tilde{\xi}, \tilde{x}))) \tilde{g}(\tilde{\theta}) \langle \nabla \tilde{\psi}(\tilde{\theta}), \tilde{u} \rangle \tmop{d}\tilde{\theta}\right|,
\end{split}
\end{equation}
where $\tilde{A}$ is a constant depending on $\|F_1'(\tilde{\xi})\|, \|F_2'(\tilde{y}_\xi)\|$ and $\|F_2'(\tilde{z}_\xi)\|$. Then there exists $A$ and we have 
\begin{equation}
    R_r \geqslant R(1-\epsilon A),
\end{equation}
where 
\[
     R = \frac{1}{ 2M^*} \int_{\overline{p_B}}^{\underline{p_A}} \frac{1}{\Phi(p)}\mathd p .
\]
Thus we have proven this theorem.
\end{proof}
\section{Implementation Details and Experimental Settings}
\label{appendix2}
\subsection{Practical Algorithms for Calculating $M^\ast$}

For resolvable transformations in Theorem \ref{thm-1}, the $M^\ast$ is defined as
\begin{equation}
    M^\ast = \max_{\xi,\theta \in P} \|M(\xi, \theta)\|.
\end{equation}
Since if we have verified that the semantic transformation is resolvable, most of time we have a closed form of $M^\ast$ like contrast/brightness transformation and we are able to calculate it analytically as shown in \citet{li2021tss}.

For non-resolvable transformations in Corollary \ref{cor-1}, $M^\ast$ is defined as 
\begin{equation}
 M^{\ast} = \max_{\xi \in P} \sqrt{\frac{1}{\sigma_1^2} + \frac{1}{\sigma_2^2}
    \left\| \frac{\partial F_{2} (y_{\xi})}{\partial \xi} - A_1 \right\|_2^2}.
\end{equation}
This ratio is similar with the Lipschitz bound for a semantic transformation in \citet{li2021tss}.For low dimensional semantic transformations, we are able to interpolate the domain to find a maximum $M^\ast$ and corresponding $\xi$. But this remains a challenge for high dimensional semantic transformations. Specifically, for a given $\xi$, we need to compute the norm of $\frac{\partial F_2(y_\xi)}{\partial \xi}-A_1$. The Jacobian matrix is $n \times n$. Caculating it requires $n$ times of backpropagation. Thus it is inefficient to store the matrix or directly compute its norm. To solve the problem, we noitice that 
\begin{equation}
    \left\|\frac{\partial F_2(y_\xi)}{\partial \xi}-A_1 \right\|_2^2 = \max_{\|u\|_2=1} \left\|\left(\frac{\partial F_2(y_\xi)}{\partial \xi} - A_1\right)u\right\|_2^2.
\end{equation}
And then we have
\begin{equation}
    \left(\frac{\partial F_2(y_\xi)}{\partial \xi}-A_1\right)^\top u = \frac{\partial}{\partial \xi} \left\langle F_2(y_\xi)-A_1, u\right\rangle.
\end{equation}
Since tranposing a matrix does not change its norm, we could calculate its norm by optimizing $u$ that,
\begin{equation}
    \max_{\|u\|_2=1} \left\|\frac{\partial}{\partial \xi} \langle F_2(y_\xi)-A_1, u\rangle\right\|_2^2.
\end{equation}
Using this formulation, we only need to multiply the output with an unit vector and perform one backpropagation. This is a simple convex optimization problem. Then we could use any iterative algorithm to find its solution which is very fast to compute. This trick is crucial and it makes the matrix norm computation to be scalable in practice. 

\subsection{Other Experimental Details}

Our GSmooth requires to train two neural networks. First we randomly generate corrupted images to train a image-to-image neural network. The training process of classifiers and certification for semantic transformations are done on 2080Ti GPUs.  We use a U-Net \citep{ronneberger2015u} for the surrogate model and replace all BatchNorm layers with GroupNorm \citep{wu2018group} since we might use the model in low bacthsize settings. The U-Net could be replace by other networks used in image segmentation or superresolution like Res-UNet \citep{diakogiannis2020resunet} or EDSR \citep{lim2017enhanced}. We use L1-loss to train the surrogate model which achieves better accuracy than others which is also reported \citep{lim2017enhanced}. 

After training a surrogate model to simulate the semantic transformation, we then train the base classifier for certification with a moderate data augmentation \citep{li2021tss, cohen2019certified} to ensure that training and testing of the classifier is performed on the same distribution. There are two types of data augmentation, one is the semantic transformation  and the other is the augmented noise introduced only in our work. Data augmentation based semantic transformation could be done using both the surrogate model or the raw semantic transformation. We can only use the surrogate model to add the augmented noise because this noise is a type of semantic noise in the layers of surrogate model. In our experiments the standard deviation of the augmented noise is chosen from $0.1\sim0.4$ depending on the performance. The basic network architectures for these datasets are kept the same with \citet{li2021tss}. On CIFAR-100 daatsets, we use a PreResNet \citep{he2016identity} re-implement the method by \citet{li2021tss}. We also adopt the progressive sampling trick mentioned in TSS \citep{li2021tss} which is useful to reduce computational cost and certify larger radius. The details could also be found in \citet{li2021tss}.

\end{document}